\pdfoutput=1
\documentclass[11pt]{article}

\usepackage{times}

\usepackage{amsmath}
\usepackage{amsfonts}
\usepackage{amssymb}
\usepackage{xcolor}
\usepackage{amsmath,mathtools}
\usepackage{algorithm,algorithmic}
\usepackage{graphicx}
\usepackage{adjustbox}
\usepackage{amsthm} 
\usepackage[square,numbers]{natbib}



\topmargin 0.0cm
\oddsidemargin 0.2cm
\textwidth 16cm 
\textheight 21cm
\footskip 1.0cm




\newcounter{lastnote}

%
%




\title{\bf Unconstrained Online Optimization: Dynamic Regret Analysis of Strongly Convex and Smooth Problems} 
\author
{Ting-Jui Chang and Shahin Shahrampour\\
\\
\normalsize{Texas A\&M University\vspace{.2cm}}\\
\normalsize{E-mail: {\tt tingjui.chang@tamu.edu, shahin@tamu.edu}}
}


\date{}





\newcommand{\mb}{\mathbf{m}}

\newcommand{\vb}{\mathbf{v}}
\newcommand{\ub}{\mathbf{u}}

\newcommand{\xb}{\mathbf{x}}
\newcommand{\yb}{\mathbf{y}}
\newcommand{\zb}{\mathbf{z}}

\newcommand{\Ab}{\mathbf{A}}

\newcommand{\Hb}{\mathbf{H}}
\newcommand{\Ib}{\mathbf{I}}

\newcommand{\Mb}{\mathbf{M}}


\newcommand{\Xc}{\mathcal{X}}





\newcommand{\argmin}{\text{argmin}}


\newcommand{\norm}[1]{\left\lVert#1\right\rVert}


\newtheorem{theorem}{Theorem}

\newtheorem{corollary}[theorem]{Corollary}
\newtheorem{definition}{Definition}

\newtheorem{lemma}[theorem]{Lemma}

\begin{document} 


\baselineskip24pt


\maketitle 


\begin{abstract}
    The regret bound of {\it dynamic} online learning algorithms is often expressed in terms of the variation in the function sequence ($V_T$) and/or the path-length of the minimizer sequence after $T$ rounds. For strongly convex and smooth functions, \citet{zhang2017improved} establish the squared path-length of the minimizer sequence ($C^*_{2,T}$) as a lower bound on regret. They also show that online gradient descent (OGD) achieves this lower bound using multiple gradient queries per round. In this paper, we focus on unconstrained online optimization. We first show that a preconditioned variant of OGD achieves $O(C^*_{2,T})$ with one gradient query per round. We then propose online optimistic Newton (OON) method for the case when the first and second order information of the function sequence is {\it predictable}. The regret bound of OON is captured via the quartic path-length of the minimizer sequence ($C^*_{4,T}$), which can be much smaller than $C^*_{2,T}$. We finally show that by using multiple gradients for OGD, we can achieve an upper bound of $O(\min\{C^*_{2,T},V_T\})$ on regret.
\end{abstract}

\section{Introduction}
Online optimization is modeled as a repeated game between a learner and an adversary \citep{hazan2016introduction}. At the $t$-th round, $t\in [T]\triangleq\{1,\ldots,T\}$, the learner selects an action $\xb_t$ from a convex set $\Xc \subseteq \mathrm{R}^n$ based on the information from previous rounds. Then, the adversary reveals a convex function $f_t:\Xc\to\mathrm{R}$ to the learner that incurs the loss $f_t(\xb_t)$. The goal of online learning is to minimize the \emph{regret}, which is the difference between the cumulative loss of the learner and that of a comparator sequence in hindsight. Depending on the comparator sequence, the regret can be either static or dynamic. The static regret is defined with respect to a fixed comparator as follows
\begin{align}\label{eq:static}
\mathbf{Reg}^s_T \triangleq \sum^T_{t=1}f_t(\xb_t)-\min_{\xb\in\Xc}\sum^T_{t=1}f_t(\xb).
\end{align}

Static regret is well-studied in the literature of online optimization. \citet{zinkevich2003online} shows that online gradient descent (OGD) provides a $O(\sqrt{T})$ upper bound on static regret for convex functions. \citet{hazan2007logarithmic} improve this bound to $O(\log T)$ for exp-concave functions as well as strongly convex functions. These bounds turn out to be optimal given their corresponding lower bounds \citep{hazan2016introduction}.

A more stringent benchmark for regret can be defined when the comparator sequence is {\it time-varying}, introducing the notion of {\it dynamic} regret \citep{besbes2015non,jadbabaie2015online}. In this case, the learner's performance is measured against the best sequence of actions (minimizers) at each round as follows
\begin{equation}\label{eq: definition of dynamic regret}
    \mathbf{Reg}^d_T \triangleq  \sum^T_{t=1}f_t(\xb_t)-\sum^T_{t=1}f_t(\xb_t^*),
\end{equation}
where $\xb_t^*\triangleq \argmin_{\xb\in\Xc}f_t(\xb)$. More generally, dynamic regret against an arbitrary  comparator sequence $\{\ub_t\}_{t=1}^T$ is defined as \citep{zinkevich2003online}, 
\begin{equation}\label{eq: definition of dynamic regret general}
    \mathbf{Reg}^d_T(\ub_1,\ldots,\ub_T) \triangleq \sum^T_{t=1}f_t(\xb_t)-\sum^T_{t=1}f_t(\ub_t).
\end{equation}

It is well-known that since the function sequence can fluctuate arbitrarily, the worst-case dynamic regret scales linearly with respect to $T$. However, when the environment varies slowly, there is hope to bound dynamic regret. 

In the past few years, various studies on {\it dynamic online learning} have provided regret bounds in terms of the variation in the function sequence and/or the path-length of the minimizer sequence \citep{besbes2015non,hall2015online,jadbabaie2015online}. The path-length of an arbitrary sequence $\{\ub_t\}_{t=1}^T$ is defined as \citep{zinkevich2003online},
\begin{equation}\label{eq: path length}
    C_T(\ub_1,\ldots,\ub_T)\triangleq \sum_{t=2}^T\norm{\ub_t-\ub_{t-1}}.
\end{equation}
\citet{zinkevich2003online} shows that applying OGD for convex functions results in an upper bound of $O(\sqrt{T}C_T)$ on dynamic regret. If the function sequence is assumed to be strongly convex and smooth, the upper bound can be further improved to $O(C_T^*)$ \citep{mokhtari2016online}, where 
$$C_T^*\triangleq C_T(\xb_1^*,\ldots,\xb_T^*)=\sum_{t=2}^T\norm{\xb^*_t-\xb^*_{t-1}}.$$
Let us now define a new variation measure $C_{p,T}^*$ ( the path-length of order $p$) as follows
\begin{equation}\label{eq: path length order p}
     C_{p,T}^*\triangleq C_{p,T}(\xb_1^*,\ldots,\xb_T^*)= \sum_{t=2}^T\norm{\xb_t^*-\xb_{t-1}^*}^p,
\end{equation}
with the convention that $C_T^*=C_{1,T}^*$. It can be immediately seen that any bound of order $C_{p,T}^*$ implies a bound of order $C_{q,T}^*$ for $q<p$, as long as the minimizer sequence is assumed to be uniformly bounded. Recently, \citet{zhang2017improved} prove that by using multiple gradient queries in one round, the regret of OGD can be improved to $O(C_{2,T}^*)$, which can be much smaller than $C_T^*$ when the local variations are small. \citet{zhang2017improved} also prove that the bound $O(C_{2,T}^*)$ is optimal in the worst-case. 

Besides the path-length, another commonly used regularity measure is $V_T$, the cumulative variation in the function sequence, defined as
 \begin{equation}\label{eq: variation in funciton values}
    V_T\triangleq \sum_{t=2}^T \sup_{\xb \in \Xc} |f_t(\xb)-f_{t-1}(\xb)|.
\end{equation}
\citet{besbes2015non} show that the dynamic regret can be bounded by $O(T^{2/3}(V_T+1)^{1/3})$ and $O(\sqrt{T(1+V_T)})$
for convex functions and strongly convex functions, respectively. Note that in general $C_T$ and $V_T$ are not directly comparable, and \citet{jadbabaie2015online} provide problem environments where $V_T$ and $C_T$ are significantly different in terms of the order.  
 
 In this work, we focus on {\it unconstrained} online optimization for strongly convex and smooth functions and study dynamic regret in the sense of \eqref{eq: definition of dynamic regret}. Our contribution is threefold:
 \begin{itemize}
     \item We propose online preconditioned gradient descent (OPGD), where the gradient direction is re-scaled by a time-varying positive-definite matrix at each round. We show that OPGD achieves the dynamic regret bound of $O(C^*_{2,T})$ with one gradient query in each round. Beside matching the lower bound in \citep{zhang2017improved}, the result also entails that OGD and a regularized variant of online Newton method enjoy the same regret bound (Section \ref{sec:opgd}).  
     \item Inspired by optimistic mirror descent \citep{rakhlin2013optimization}, where predictions of the gradient sequence are used, we propose optimistic online Newton (OON) by incorporating predictions of the Hessian and gradient to online Newton method. We prove that in this case the dynamic regret bound can be further improved to $O(C^*_{4,T})$ if the predictions are accurate enough (Section \ref{sec:newton}).  
     \item We finally show that by applying multiple gradient descents, the dynamic regret is upper bounded by $O(\min \{V_T, C^*_{2,T}\})$. We also construct problem setups where $V_T$ is much larger than $C^*_{2,T}$ in order, and vice versa (Section \ref{sec:4}).  
 \end{itemize}
The proofs of our results are provided in the supplementary material. 

\section{Related Literature}
{ 
\renewcommand{\arraystretch}{1.3}
\begin{table}[h!]
\caption{Related works on dynamic online learning (single gradient query in each round)}
\centering
\begin{adjustbox}{max width=0.99\columnwidth}
\begin{tabular}{||c c c c||} 
 \hline
 Reference & Regret Definition & Setup & Regret Bound\\ 
 \hline\hline
 \citep{zinkevich2003online} & $\sum_{t=1}^T f_t(\xb_t)-f_t(\ub_t)$ & Convex & $O\big(\sqrt{T}(1+C_T(\ub_1,\ldots,\ub_T))\big)$ \\ 
 \citep{hall2015online} & $\sum_{t=1}^T f_t(\xb_t)-f_t(\ub_t)$ & Convex & $O\big(\sqrt{T}(1+C^\prime_T(\ub_1,\ldots,\ub_T))\big)$\\
 \citep{besbes2015non} & $\sum_{t=1}^T \mathbf{E}[f_t(\xb_t)]-f_t(\xb_t^*)$ & Convex & $O\big(T^{2/3}(1+V_T)^{1/3}\big)$\\
 \citep{besbes2015non} & $\sum_{t=1}^T \mathbf{E}[f_t(\xb_t)]-f_t(\xb_t^*)$ & Strongly Convex & $O\big(\sqrt{T(1+V_T)}\big)$\\
 \citep{jadbabaie2015online} & $\sum_{t=1}^T f_t(\xb_t)-f_t(\xb_t^*)$ & Convex & $O\big(\sqrt{D_T+1}+\min \big\{\sqrt{(D_T+1)C_T^*}, [(D_T+1)V_T T]^{1/3}\big\}\big)$\\
 \citep{mokhtari2016online} & $\sum_{t=1}^T f_t(\xb_t)-f_t(\xb_t^*)$ & Strongly Convex and Smooth & $O\big(C_T^*\big)$\\
 \citep{zhang2018adaptive} & $\sum_{t=1}^T f_t(\xb_t)-f_t(\ub_t)$ & Convex & $O\big(\sqrt{T(1+C_T(\ub_1,\ldots,\ub_T))}\big)$ \\
 \textbf{This work} & $\sum_{t=1}^T f_t(\xb_t)-f_t(\xb_t^*)$ & Strongly Convex and Smooth & ~~~~~~~~~~~~~~~~~~~~~~~~~~~~~~~~~~~~ $O\big(C_{2,T}^*\big)$ ~~~~~~~~~~~~~~ {\tt OPGD Algorithm}\\
 \textbf{This work} & $\sum_{t=1}^T f_t(\xb_t)-f_t(\xb_t^*)$ & Strongly Convex and Smooth & ~~~~~~~~~~~~~~~~~~~~~~~~~~~~~~~~~~~~ $O\big(C_{4,T}^* + D^\prime_T\big)$ ~~~~~ {\tt OON Algorithm}\\
 \hline
\end{tabular}
\end{adjustbox}
\label{table:Dynamic regret bounds of related works 1}
\end{table}
}

{ 
\renewcommand{\arraystretch}{1.3}
\begin{table}[h!]
\caption{Related works on dynamic online learning (multiple gradient queries in each round)}
\centering
\begin{adjustbox}{max width=0.99\columnwidth}
\begin{tabular}{||c c c c||} 
 \hline
 Reference & Regret Definition & Setup & Regret Bound\\ 
 \hline\hline
 \citep{zhang2017improved} & $\sum_{t=1}^T f_t(\xb_t)-f_t(\xb_t^*)$ & Strongly Convex and Smooth & $O\big(\min\{C^*_T,C_{2,T}^*\}\big)$\\
 \textbf{This work} & $\sum_{t=1}^T f_t(\xb_t)-f_t(\xb_t^*)$ & Strongly Convex and Smooth & $O\big(\min\{V_T,C_{2,T}^*\}\big)$\\
 \hline
\end{tabular}
\end{adjustbox}
\label{table:Dynamic regret bounds of related works 2}
\end{table}
}

In this section, we provide related literature on dynamic regret defined in \eqref{eq: definition of dynamic regret} and \eqref{eq: definition of dynamic regret general}. A summary of the results is tabulated in Tables \ref{table:Dynamic regret bounds of related works 1} and \ref{table:Dynamic regret bounds of related works 2}. In particular, Table \ref{table:Dynamic regret bounds of related works 1} summarizes results with one gradient query per round, whereas Table \ref{table:Dynamic regret bounds of related works 2} exhibits those with multiple gradient queries per round.

As previously mentioned, \citet{zinkevich2003online} shows that when the functions are convex, by applying OGD with a diminishing step size of $1/\sqrt{t}$, the dynamic regret defined in \eqref{eq: definition of dynamic regret general} can be bounded by $O(\sqrt{T}(1+C_T))$. \citet{zhang2018adaptive} combine OGD with expert advice to improve the bound to $O(\sqrt{T(1+C_T)})$. Focusing on 
regret in the sense of \eqref{eq: definition of dynamic regret}, \citet{mokhtari2016online} establish a regret bound of $O(C^*_T)$ for OGD under strong convexity and smoothness of the function sequence. \citet{lesage2020second} show the same bound for online Newton method. 

To further express the existing regret bounds, we need to define several other regularity measures. The first one is similar to the path-length \eqref{eq: path length} and is defined as
\begin{equation}\label{eq: path length prime}
    C^\prime_T(\ub_1,\ldots,\ub_T)\triangleq\sum_{t=2}^T\norm{\ub_t - \Phi_t(\ub_{t-1})},
\end{equation}
where $\Phi_t(\cdot)$ is a given dynamics (available to the learner). 
\citet{hall2015online} propose a dynamic mirror descent algorithm that incorporates the dynamics $\{\Phi_t(\cdot)\}_{t=1}^T$ into online mirror descent and achieves a regret bound of $O(\sqrt{T}(1+C^\prime_T))$. 

\citet{besbes2015non} propose a restarted OGD and analyze its performance for the case when only {\it noisy} gradients are available to the learner. They prove that the expected dynamic regret is bounded by $O(T^{2/3}(V_T+1)^{1/3})$ and $O(\sqrt{T(1+V_T)})$ for convex and strongly convex functions, respectively. The restarted OGD of \citep{besbes2015non} is designed under the assumption that $V_T$ (or an upper bound on $V_T$) is available to the learner from the outset.

 Another measure is $D_T$, the variation in gradients, which is defined as
\begin{equation}\label{eq: variation in gradients}
    D_T\triangleq \sum_{t=1}^T \norm{\nabla f_t(\xb_t)-\mb_t}^2,
\end{equation}
where $\mb_t$ is a predictable sequence computed by the learner before round $t$ \citep{rakhlin2013online, rakhlin2013optimization}. A special version of $D_T$ with $\mb_t=\nabla f_{t-1}$ is introduced by \citep{chiang2012online}, and the current definition is used by \citep{rakhlin2013online, rakhlin2013optimization} for studying optimistic mirror descent. Nevertheless, all of these works deal with static regret. Motivated by the fact that various regularity measures are not directly comparable, \citet{jadbabaie2015online} propose an adaptive version of optimistic mirror descent to bound dynamic regret \eqref{eq: definition of dynamic regret}. They establish a regret bound in terms of $C_T$, $D_T$, and $V_T$ for convex functions with the assumption that the learner can accumulate each of these measures on-the-fly. When $V_T=0$ or $C_T=0$, their bound recovers that of \citep{rakhlin2013optimization} on static regret. 

 Inspired by the notion of $D_T$, we introduce in Section \ref{sec:newton} a new regularity $D^\prime_T$ defined as
\begin{equation}\label{eq: variation in Hessians and gradients}
    D^\prime_T\triangleq \sum_{t=2}^T \norm{\Mb_t^{-1}(\cdot)\mb_t(\cdot)-(\nabla^2 f_{t}(\cdot))^{-1}\nabla f_t(\cdot)}^2,
\end{equation}
where $\Mb_t(\cdot)$ and $\mb_t(\cdot)$ denote the predictions of $\nabla^2f_t(\cdot)$ and $\nabla f_t(\cdot)$, respectively. We later show that by applying OON, the dynamic regret bound is $O(D^\prime_T+C^*_{4,T})$. This is an improvement over $O(C^*_{2,T})$ only if $D^\prime_T$ is small, i.e., the predictions of the learner are accurate enough. Nevertheless, we also show that if the learner uses stale gradient/Hessian information in the form of $\mb_t=\nabla f_{t-1}$ and $\Mb_t=\nabla^2f_{t-1}$, the regret is still $O(C^*_{2,T})$. 

Other related works on dynamic regret include \citep{ravier2019prediction,yuan2019trading}. \citet{ravier2019prediction} assume the function sequence has a parametrizable structure and quantify the functional difference in terms of the variation in parameters. They propose an online gradient method combined with the prediction of the function parameters and show that the dynamic regret can be bounded in terms of $C^*_T$ as well as the accumulation error in the parameters. \citet{yuan2019trading} analyze the trade-off between static and dynamic regret through studying the effect of forgetting factors for a class of online Newton algorithms.

We note that \citet{zhang2017improved} prove a $O(\min\{C^*_T, C_{2,T}^*\})$ regret bound with multiple gradient queries for OGD. We revisit the same algorithm (in an unconstrained setup) and establish a bound of $O(\min\{V_T, C_{2,T}^*\})$. The main benefit of the latter is that $V_T$ and $C_{2,T}^*$ are not comparable, whereas $C^*_{2,T}=O(C_T^*)$ as long as the minimizer sequence is bounded. 

{\bf Adaptive Regret:} Beside the works related to the dynamic regret, the notion of adaptive regret \citep{hazan2007adaptive,daniely2015strongly,zhang2018dynamic, zhang2019adaptive,zhang2020minimizing} is also proposed to capture the dynamics in the environment. Adaptive regret characterizes a local version of static regret, where
 $$\mathbf{Reg}^a_T([r,s]) \triangleq \sum_{t=r}^s f_t(\xb_t)-\min_{\xb\in\Xc}\sum_{t=r}^s f_t(\xb),$$
 for each interval $[r,s]\subseteq [T]$. \citet{zhang2018dynamic} draw a connection between strongly adaptive regret and dynamic regret and propose an adaptive algorithm which can bound the dynamic regret without prior knowledge of the functional variation. \citet{zhang2020minimizing} propose a novel algorithm which can minimize the dynamic regret and the adaptive regret simultaneously.
\section{Main Results}
In this section, we present our main results. We first prove that OPGD achieves the optimal bound of $O(C_{2,T}^*)$ for dynamic regret \eqref{eq: definition of dynamic regret}, matching the lower bound of \citep{zhang2017improved}. Then, we develop a variant of online Newton method (called OON), which employs predicted first and second order information in the update. The bound on the dynamic regret of OON can be improved to $O(C_{4,T}^*)$ if the predictions are accurate.  

\subsection{Preliminaries}
Since our results are on strongly convex and smooth functions, we start by their formal definitions below. Throughout, we assume that the function sequence $\{f_t\}_{t=1}^T$ is differentiable. 
\begin{definition}
A function $f:\Xc\to \mathrm{R}$ is $\mu$-strongly convex ($\mu>0$) over the convex set $\Xc$ if $$f(\xb)\geq f(\yb)+\nabla f(\yb)^\top(\xb-\yb) +\frac{\mu}{2}\norm{\xb-\yb}^2,\quad \forall\xb,\yb \in \Xc.\quad.$$
\end{definition}

\begin{definition}
A function $f:\Xc\to \mathrm{R}$ is $L$-smooth ($L>0$), when its gradient is Lipschitz continuous over the convex set $\Xc$, where
$$f(\xb)\leq f(\yb)+\nabla f(\yb)^\top(\xb-\yb) +\frac{L}{2}\norm{\xb-\yb}^2,\quad \forall\xb,\yb \in \Xc.\quad$$
\end{definition}

\subsection{Dynamic Regret Bound for Online Preconditioned Gradient Descent (OPGD)}\label{sec:opgd}
When the functions are $\mu$-strongly convex and $L$-smooth, 
\citet{mokhtari2016online} prove that the dynamic regret of OGD can be upper bounded by $O(C_T^*)$. \citet{zhang2017improved} show that multiple gradient descents can help improving it to $O(\min\{C_T^*, C_{2,T}^*\})$.


We observe that in the unconstrained setup (which implies $\norm{\nabla f_t(\xb_t^*)}=0$ for every $t\in [T]$), querying just one gradient in each round is enough to obtain $O(C_{2,T}^*)$. We analyze this observation for a slightly more general case where OGD is preconditioned, i.e., the gradient direction at each round is re-scaled according to a positive definite matrix. Our method, called OPGD, is summarized in Algorithm \ref{alg:OPGD}. We establish the regret bound in the following theorem, under an assumption that characterizes the relationship of the condition number of the matrices used for preconditioning to parameters $\mu$ and $L$.

\begin{theorem}\label{T: Improved OPGD dynamic bound} Suppose that for any $t\in [T]$:
\begin{enumerate}
    \item The function $f_t:\mathrm{R}^n\to \mathrm{R}$ is $\mu$-strongly convex and $L$-smooth, and $\xb^*_t$ is bounded.
    \item The preconditioning matrix $\Ab_t$ satisfies $\lambda^\prime\cdot \Ib\preceq \Ab_t \preceq \lambda \cdot \Ib.$
    \item The condition number satisfies $\frac{\lambda}{\lambda\prime} < 1+\frac{\mu^2}{4L^2}.$
\end{enumerate}
If we set $\eta=\frac{\lambda^\prime\mu}{2L^2}$, the dynamic regret for the sequence of actions $\xb_t$ generated by OPGD is bounded as follows
$$\mathbf{Reg}^d_T\leq  \left(\frac{L^2}{\mu}\right)\left(\frac{4L^2\lambda-\mu^2\lambda^\prime}{\mu^2\lambda^\prime - 4L^2\lambda + 4L^2\lambda^\prime}\right)\sum_{t=2}^{T+1} \norm{\xb_t^* - \xb_{t-1}^*}^2 + \left(\frac{L^2\lambda}{\lambda^\prime\mu} - \frac{\mu}{4}\right)\norm{\xb_1 - \xb_1^*}^2.$$\\
\end{theorem}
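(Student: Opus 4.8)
The plan is to reduce the dynamic regret to a sum of squared tracking errors $\sum_{t}\norm{\xb_t - \xb_t^*}^2$ and then control this sum via a one-step contraction of the OPGD iterate toward the running minimizer. First I would exploit the unconstrained structure, namely $\nabla f_t(\xb_t^*) = \mathbf{0}$, together with $L$-smoothness: choosing $\xb=\xb_t$ and $\yb=\xb_t^*$ in the definition gives $f_t(\xb_t) \le f_t(\xb_t^*) + \frac{L}{2}\norm{\xb_t - \xb_t^*}^2$, since the gradient term vanishes. Summing over $t$ yields $\mathbf{Reg}^d_T \le \frac{L}{2}\sum_{t=1}^T \norm{\xb_t - \xb_t^*}^2$, so everything reduces to bounding the right-hand side.

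The core step is a contraction estimate for the update $\xb_{t+1} = \xb_t - \eta\,\Ab_t^{-1}\nabla f_t(\xb_t)$, which preconditions the gradient by $\Ab_t^{-1}$. Writing $\xb_{t+1} - \xb_t^* = (\xb_t - \xb_t^*) - \eta\,\Ab_t^{-1}\big(\nabla f_t(\xb_t) - \nabla f_t(\xb_t^*)\big)$ and expanding $\norm{\xb_{t+1} - \xb_t^*}^2$, I would lower-bound the cross term by strong convexity and upper-bound the quadratic term by $L$-smoothness, while accounting for the fact that $\Ab_t$ is only sandwiched as $\lambda^\prime\Ib \preceq \Ab_t \preceq \lambda\Ib$ rather than being scalar. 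The clean way to handle the non-scalar preconditioner is either to measure the step in the $\Ab_t$-weighted norm or to split $\Ab_t^{-1}$ into a scalar multiple of $\Ib$ plus a perturbation whose operator norm is governed by the condition-number gap $\lambda/\lambda^\prime$. This produces $\norm{\xb_{t+1} - \xb_t^*} \le \rho\,\norm{\xb_t - \xb_t^*}$ with a factor $\rho$ depending on $\mu,L,\lambda,\lambda^\prime$ and on the choice $\eta = \frac{\lambda^\prime\mu}{2L^2}$; Assumption 3 is exactly the condition that forces $\rho < 1$.

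Next I would convert tracking of the current minimizer into tracking of the next one. The triangle inequality gives $\norm{\xb_{t+1} - \xb_{t+1}^*} \le \rho\,\norm{\xb_t - \xb_t^*} + \norm{\xb_{t+1}^* - \xb_t^*}$, and squaring with Young's inequality $(a+b)^2 \le (1+\gamma)a^2 + (1+\gamma^{-1})b^2$ for a tuned $\gamma>0$ yields a contractive recursion $\norm{\xb_{t+1}-\xb_{t+1}^*}^2 \le \theta\,\norm{\xb_t-\xb_t^*}^2 + c\,\norm{\xb_{t+1}^*-\xb_t^*}^2$ with $\theta<1$. Telescoping over $t$ and discarding the nonnegative terminal term bounds $\sum_t\norm{\xb_t-\xb_t^*}^2$ by a multiple of $\sum_t\norm{\xb_{t+1}^*-\xb_t^*}^2$ plus a multiple of $\norm{\xb_1-\xb_1^*}^2$. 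Combining with the smoothness reduction and substituting the explicit values of $\eta,\rho,\theta,\gamma$ then produces the stated rational coefficients, with Assumption 3 guaranteeing that the denominator $\mu^2\lambda^\prime - 4L^2\lambda + 4L^2\lambda^\prime$ remains positive so that the bound is finite.

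The main obstacle I anticipate is the contraction estimate of the second step: with a non-scalar, time-varying preconditioner the gradient step no longer contracts in the Euclidean norm by the usual strongly-convex-and-smooth argument, and the coupling between $\Ab_t^{-1}$ and the gradient difference must be bounded delicately — this is precisely where the restrictive hypothesis $\lambda/\lambda^\prime < 1 + \mu^2/(4L^2)$ enters. Getting the constants sharp enough to match the theorem, rather than merely establishing $\rho<1$, will further require selecting the Young parameter $\gamma$ in the summation step in concert with the exact value of $\rho$.
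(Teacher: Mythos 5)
Your plan is a genuinely different route from the paper's. The paper never establishes a per-step contraction: it runs the classical dynamic-regret analysis of gradient descent in the $\Ab_t$-weighted norm, expanding $\norm{\xb_{t+1}-\xb_t^*}^2_{\Ab_t}$, solving the resulting identity for the inner product $\nabla f_t(\xb_t)^\top(\xb_t-\xb_t^*)$, bounding the regret from above by $\sum_{t=1}^T\big[\nabla f_t(\xb_t)^\top(\xb_t-\xb_t^*)-\tfrac{\mu}{2}\norm{\xb_t-\xb_t^*}^2\big]$ via strong convexity, bounding $\nabla f_t(\xb_t)^\top\Ab_t^{-1}\nabla f_t(\xb_t)\le \tfrac{L^2}{\lambda^\prime}\norm{\xb_t-\xb_t^*}^2$ using $\nabla f_t(\xb_t^*)=0$, and then choosing the Young parameter $c$ and the step size so that the aggregate coefficient of $\sum_t\norm{\xb_t-\xb_t^*}^2$ is nonpositive (this is condition \eqref{eq:OPGD proof eq8}, and it is exactly where Assumption 3 enters). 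Your contraction step, though left as a sketch, is completable: writing $\Ab_t^{-1}=\tfrac{1}{\lambda}\Ib+\Eb_t$ with $\Eb_t\preceq(\tfrac{1}{\lambda^\prime}-\tfrac{1}{\lambda})\Ib$, the cross term is at least $\big[\tfrac{\mu}{\lambda}-L(\tfrac{1}{\lambda^\prime}-\tfrac{1}{\lambda})\big]\norm{\xb_t-\xb_t^*}^2$, and one can check that Assumption 3 together with $\eta=\tfrac{\lambda^\prime\mu}{2L^2}$ forces $\rho<1$. So your scheme does yield a bound of order $O(C^*_{2,T})$.

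The genuine gap is your final claim that substituting the tuned parameters ``produces the stated rational coefficients'': it cannot, so your argument does not prove the theorem as stated. With the prescribed $\eta=\tfrac{\lambda^\prime\mu}{2L^2}$, any of the available contraction estimates gives $\rho^2=1-\Theta(\mu^2/L^2)$ (e.g., $1-\tfrac{3\mu^2}{4L^2}$ when $\Ab_t=\Ib$), so after Young's inequality and telescoping, $\sum_t\norm{\xb_t-\xb_t^*}^2$ carries a factor $\tfrac{1}{1-\theta}=\Omega(L^2/\mu^2)$, and multiplying by the $\tfrac{L}{2}$ from your smoothness reduction makes your coefficients $\Omega(L^3/\mu^2)$, whereas the theorem's are $\Theta(L^2/\mu)$ (the initial-condition coefficient is $\tfrac{L^2\lambda}{\lambda^\prime\mu}-\tfrac{\mu}{4}$). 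This is not loose tuning on your part but intrinsic to the route: for $f_t\equiv\tfrac{\mu}{2}\norm{\xb}^2$ and $\Ab_t=\Ib$, the iterates satisfy $\xb_{t+1}=(1-\tfrac{\mu^2}{2L^2})\xb_t$, so $\sum_t\norm{\xb_t-\xb_t^*}^2=\Theta(\tfrac{L^2}{\mu^2})\norm{\xb_1}^2$ and your bound evaluates to $\Theta(\tfrac{L^3}{\mu^2})\norm{\xb_1}^2$, which strictly exceeds the theorem's bound whenever $L>2\mu$. The loss is incurred in your very first step: replacing $f_t(\xb_t)-f_t(\xb_t^*)$ by $\tfrac{L}{2}\norm{\xb_t-\xb_t^*}^2$ overshoots by a factor $L/\mu$ when the curvature along the trajectory is only $\mu$, and the contraction-ratio bookkeeping then compounds it. The paper avoids this by keeping the regret in the linearized form $\nabla f_t(\xb_t)^\top(\xb_t-\xb_t^*)-\tfrac{\mu}{2}\norm{\xb_t-\xb_t^*}^2$ and letting the update identity cancel the squared-distance terms exactly rather than paying for them through $\tfrac{1}{1-\theta}$. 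To recover the stated inequality you would have to abandon the smoothness reduction and adopt the inner-product/strong-convexity decomposition.
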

The theorem above shows that OPGD achieves $O(C_{2,T}^*)$ regret. An immediate corollary is that OGD also achieves the same rate if we set $\Ab_t=\Ib$, which implies $\lambda=\lambda^\prime=1$. 

\begin{algorithm}[tb]
   \caption{Online Preconditioned Gradient Descent (OPGD)}
   \label{alg:OPGD}
\begin{algorithmic}[1]
   \STATE {\bfseries Require:} Initial vector $\xb_1\in \mathrm{R}^n$, step size $\eta$, a sequence of positive definite matrices $\Ab_t$
  
   \FOR{$t=1,2,\ldots,T$}
   \STATE Play $\xb_t$
   \STATE Observe the gradient of the current action $\nabla f_t(\xb_t)$
   \STATE $\xb_{t+1} = \xb_t - \eta\Ab^{-1}_t\nabla f_t(\xb_t)$
   \ENDFOR
\end{algorithmic}
\end{algorithm}

\begin{corollary}\label{Corollary: OGD Recovery}
Suppose that for any $t\in [T]$, the function $f_t:\mathrm{R}^n\to \mathrm{R}$ is $\mu$-strongly convex and $L$-smooth, and $\xb^*_t$ is bounded. Let $\Ab_t=\Ib$,  then OPGD amounts to OGD, and it achieves a regret bound of $O(C_{2,T}^*)$.
\end{corollary}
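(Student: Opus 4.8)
The plan is to derive the corollary as an immediate specialization of Theorem \ref{T: Improved OPGD dynamic bound} to the choice $\Ab_t = \Ib$ for every $t \in [T]$, so no new analysis is needed beyond verifying hypotheses and simplifying constants. First I would check that the three conditions of the theorem hold under this choice. Condition 1 (strong convexity, smoothness, boundedness of $\xb_t^*$) is inherited verbatim from the corollary's own assumptions. Since $\Ib$ has all eigenvalues equal to $1$, condition 2 holds with $\lambda = \lambda' = 1$. For condition 3, substituting $\lambda/\lambda' = 1$ reduces the requirement to $1 < 1 + \mu^2/(4L^2)$, which holds for all $\mu, L > 0$; thus the theorem applies unconditionally when $\Ab_t = \Ib$.

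Next I would record the ``OPGD amounts to OGD'' claim directly: with $\Ab_t = \Ib$ the update $\xb_{t+1} = \xb_t - \eta \Ab_t^{-1}\nabla f_t(\xb_t)$ collapses to $\xb_{t+1} = \xb_t - \eta\nabla f_t(\xb_t)$, which is exactly OGD, and the prescribed step size becomes $\eta = \lambda'\mu/(2L^2) = \mu/(2L^2)$.

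To obtain the rate, I would plug $\lambda = \lambda' = 1$ into the explicit bound of the theorem. The denominator $\mu^2\lambda' - 4L^2\lambda + 4L^2\lambda'$ simplifies to $\mu^2$, so the leading coefficient becomes $\frac{L^2}{\mu}\cdot\frac{4L^2 - \mu^2}{\mu^2} = \frac{L^2(4L^2 - \mu^2)}{\mu^3}$ and the initialization coefficient becomes $\frac{L^2}{\mu} - \frac{\mu}{4}$. I would then note that both coefficients are finite positive constants depending only on the problem parameters $\mu$ and $L$: invoking the standard fact $\mu \le L$ for functions that are simultaneously $\mu$-strongly convex and $L$-smooth, one has $4L^2 - \mu^2 > 0$ and $\frac{L^2}{\mu} \ge \mu > \frac{\mu}{4}$.

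Finally I would read off the asymptotic conclusion. The term $\big(\frac{L^2}{\mu} - \frac{\mu}{4}\big)\norm{\xb_1 - \xb_1^*}^2$ is a fixed constant independent of $T$ (the initial optimization error), while the sum $\sum_{t=2}^{T+1}\norm{\xb_t^* - \xb_{t-1}^*}^2$ coincides with $C_{2,T}^*$ up to at most a single boundary term, which is bounded because the minimizer sequence is bounded. Absorbing these $T$-independent constants into the $O(\cdot)$ notation yields $\mathbf{Reg}^d_T = O(C_{2,T}^*)$. I do not anticipate a genuine obstacle here, since the argument is pure bookkeeping on top of the theorem; the only point needing a moment's care is confirming that the closed-form constants remain positive and finite, which follows from $\mu \le L$ together with the nonvanishing $\mu^2$ denominator.
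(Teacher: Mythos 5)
Your proposal is correct and follows exactly the route the paper intends: the paper states this corollary as an immediate specialization of Theorem \ref{T: Improved OPGD dynamic bound} with $\Ab_t=\Ib$, hence $\lambda=\lambda'=1$, which is precisely your argument. Your additional bookkeeping (condition 3 reducing to $1<1+\mu^2/(4L^2)$, the step size becoming $\eta=\mu/(2L^2)$, positivity of the constants via $\mu\le L$, and absorbing the single boundary term $\norm{\xb_{T+1}^*-\xb_T^*}^2$ using boundedness of the minimizers) is all accurate and simply makes explicit what the paper leaves implicit.
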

Another corollary of Theorem \ref{T: Improved OPGD dynamic bound} is on a regularized version of online Newton method as follows. 
\begin{corollary}\label{Corollary: ONS Recovery}
Suppose that for any $t\in [T]$, the function $f_t:\mathrm{R}^n\to \mathrm{R}$ is $\mu$-strongly convex and $L$-smooth, and $\xb^*_t$ is bounded. Let $\Ab_t=\nabla^2 f_t(\xb_t) + \zeta\cdot\Ib$ where $\zeta > \frac{(L-\mu)4L^2}{\mu^2}-\mu$. Then, OPGD corresponds to a regularized variant of online Newton method, and it achieves a regret bound of $O(C_{2,T}^*)$.
\end{corollary}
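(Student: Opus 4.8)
The plan is to show that the specific preconditioner $\Ab_t = \nabla^2 f_t(\xb_t) + \zeta\cdot\Ib$ meets all three hypotheses of Theorem \ref{T: Improved OPGD dynamic bound}, so that the $O(C_{2,T}^*)$ bound follows immediately by invoking that theorem; no fresh regret argument is needed. The only substantive work is to identify the spectral constants $\lambda'$ and $\lambda$ for this choice of $\Ab_t$ and to verify that the condition-number requirement (Assumption 3) is exactly equivalent to the stated lower bound on $\zeta$.

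First I would record the Hessian characterization of the curvature hypotheses. Assuming the $f_t$ are twice differentiable (which is implicit once the Hessian enters the update), $\mu$-strong convexity and $L$-smoothness are equivalent to the eigenvalue sandwich $\mu\Ib \preceq \nabla^2 f_t(\xb_t) \preceq L\Ib$ for every $t$. Adding $\zeta\Ib$ then gives $(\mu+\zeta)\Ib \preceq \Ab_t \preceq (L+\zeta)\Ib$, so Assumption 2 holds with $\lambda' = \mu+\zeta$ and $\lambda = L+\zeta$. I would note in passing that since $L \geq \mu$ the stated hypothesis forces $\zeta > \frac{(L-\mu)4L^2}{\mu^2} - \mu \geq -\mu$, hence $\lambda' = \mu+\zeta > 0$ and each $\Ab_t$ is genuinely positive definite, as the algorithm requires. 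With this preconditioner the OPGD update $\xb_{t+1} = \xb_t - \eta(\nabla^2 f_t(\xb_t) + \zeta\Ib)^{-1}\nabla f_t(\xb_t)$ is precisely a damped, regularized Newton step, which justifies the description ``regularized variant of online Newton method.''

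Next I would check Assumption 3. Writing the condition number as $\frac{\lambda}{\lambda'} = \frac{L+\zeta}{\mu+\zeta} = 1 + \frac{L-\mu}{\mu+\zeta}$, the requirement $\frac{\lambda}{\lambda'} < 1 + \frac{\mu^2}{4L^2}$ reduces to $\frac{L-\mu}{\mu+\zeta} < \frac{\mu^2}{4L^2}$. Since $\mu+\zeta > 0$, I can clear the denominator and multiply through by $\frac{4L^2}{\mu^2} > 0$ without flipping the inequality, obtaining exactly $\zeta > \frac{(L-\mu)4L^2}{\mu^2} - \mu$, which is the hypothesis. Thus all three assumptions hold, and with $\eta = \frac{\lambda'\mu}{2L^2}$ the bound of Theorem \ref{T: Improved OPGD dynamic bound} applies verbatim, yielding $\mathbf{Reg}^d_T = O(C_{2,T}^*)$ once the $\mu, L, \zeta$-dependent prefactors are absorbed into the constant.

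The main obstacle is bookkeeping rather than a genuine difficulty: one must ensure the algebraic equivalence between the condition-number inequality and the $\zeta$ bound preserves the direction of the inequality, which hinges on $\mu+\zeta > 0$ when clearing the denominator — exactly the positive-definiteness fact verified above. The only conceptual caveat worth flagging is the twice-differentiability assumption implicit in using $\nabla^2 f_t$; once that is granted, the first-order definitions of strong convexity and smoothness in the Preliminaries translate directly into the eigenvalue bounds used here.
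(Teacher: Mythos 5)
Your proof is correct and takes essentially the same route as the paper's: the paper likewise reduces the corollary to verifying the third condition of Theorem \ref{T: Improved OPGD dynamic bound} with $\lambda'=\mu+\zeta$ and $\lambda=L+\zeta$, observing the equivalence $\frac{L+\zeta}{\mu+\zeta}<1+\frac{\mu^2}{4L^2}\Longleftrightarrow \zeta>\frac{(L-\mu)4L^2}{\mu^2}-\mu$. Your extra bookkeeping (eigenvalue sandwich from twice-differentiability, and $\mu+\zeta>0$ guaranteeing positive definiteness and a valid denominator-clearing step) merely makes explicit what the paper leaves implicit.
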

\begin{proof}
We just need to verify the third condition in Theorem \ref{T: Improved OPGD dynamic bound} for which we require $$\frac{L+\zeta}{\mu+\zeta}<1+\frac{\mu^2}{4L^2}\Longleftrightarrow \zeta > \frac{(L-\mu)4L^2}{\mu^2}-\mu.$$
\end{proof}
Compared to the regret bound of \citep{lesage2020second} for online Newton method, Corollary \ref{Corollary: ONS Recovery} puts no constraint on the relative location of the starting point, i.e., the result is global (and not local). Moreover, the regret bound is tighter as $O(C_{2,T}^*)$ always implies $O(C_{T}^*)$ when the minimizer sequence is uniformly bounded.


\begin{algorithm}[tb]
   \caption{Optimistic Online Newton (OON)}
   \label{alg:OON}
\begin{algorithmic}[1]
   \STATE {\bfseries Require:} Initial vector $\xb_1=\hat{\xb}_0\in \mathrm{R}^n$, 
  
   \FOR{$t=1,2,\ldots,T$}
   \STATE Play $\xb_t$ 
   \STATE Get the predicted second order information $\Mb_{t+1}$ and first order information $\mb_{t+1}$ of function $f_{t+1}$
   \STATE $\hat{\xb}_{t} = \hat{\xb}_{t-1} - \Hb_{t}^{-1}(\hat{\xb}_{t-1})\nabla f_{t}(\hat{\xb}_{t-1})$
   \STATE $\xb_{t+1} = \hat{\xb}_t - \Mb_{t+1}^{-1}(\hat{\xb}_t)\mb_{t+1}(\hat{\xb}_t)$
   \ENDFOR
\end{algorithmic}
\end{algorithm}

\subsection{Improved Dynamic Regret Bound for Optimistic Online Newton (OON)}\label{sec:newton}
The optimal bound on static regret for convex functions is $O(\sqrt{T})$ \citep{hazan2016introduction}. However, \citet{rakhlin2013optimization} show that by using gradient predictions, the regret bound can be $O(\sqrt{D_T})$, which is tighter if the predicted gradients are close enough to the actual gradients. Essentially, by using the predicted sequence, the learner aims at taking advantage of the niceness of the adversarial sequence (if possible).

In this section, we extend this idea to the case that the learner can use first and second order information simultaneously. The resulting algorithm, called optimistic online Newton (OON), is summarized in Algorithm \ref{alg:OON}. The learner performs a Newton update based on the predicted information but corrects this update using the true information. The following theorem presents the regret bound of OON in a local sense, which has the potential to significantly outperform first-order methods. The local nature of the result is not surprising as for the classical Newton method, quadratic convergence guarantee is only local (see e.g., Theorem 1.2.5 of \citep{nesterov1998introductory}).

\begin{theorem}\label{T:Online Newton dynamic bound}
Suppose that for any $t \in [T]$:
\begin{enumerate}
    \item The function $f_t:\mathrm{R}^n\to \mathrm{R}$ is $\mu$-strongly convex and $L$-smooth, and $\xb^*_t$ is bounded.
    \item $\exists L_H>0$ such that $\norm{\Hb_t(\xb) - \Hb_t(\xb_t^*)}\leq L_H\norm{\xb-\xb_t^*}$, where $\Hb_t(\xb) = \nabla^2 f_t(\xb)$.
    \item $\exists \xb_1 \in \mathrm{R}^n$ such that $\norm{\xb_1 - \xb_1^*}\leq \frac{\mu}{L_H}$.
    \item There exists a bound on local variations, where $\bar{c}\triangleq \max_{t\in\{2,\ldots,T\}}\norm{\xb_t^* - \xb_{t-1}^*} \leq \frac{\mu}{2L_H}.$
\end{enumerate}
Then, the dynamic regret for the sequence of actions $\xb_t$ by OON is bounded by
\begin{equation}\label{eq: Dynamic bound of OON.}
\begin{split}
    \mathbf{Reg}^d_T&\leq L\bigg( \frac{\norm{\hat{\xb}_1 - \xb_1^*}^2- \rho^{\prime}\norm{\hat{\xb}_T - \xb_T^*}^2}{1-\rho^{\prime}} + \frac{\rho^{\prime\prime}}{1-\rho^\prime}\sum_{t=2}^{T}\norm{\xb^*_{t}-\xb^*_{t-1}}^4\bigg)\\ &+ LD^\prime_T + L\norm{\Hb_{1}^{-1}(\hat{\xb}_{0})\nabla f_1(\hat{\xb}_{0})}^2,  
\end{split}
\end{equation}
where $\rho^{\prime}\triangleq\frac{1}{16}(1+c_1)^2(1+c_2),\quad \rho^{\prime \prime}\triangleq(\frac{L_H}{2\mu})^2(1+\frac{1}{c_1})^2(1+\frac{1}{c_2})$, $c_1$ and $c_2$ are any positive constants such that $0<\rho^{\prime}<1$, and $D^\prime_T\triangleq \sum_{t=2}^T \norm{\Mb_t^{-1}(\hat{\xb}_{t-1})\mb_t(\hat{\xb}_{t-1})-\Hb_{t}^{-1}(\hat{\xb}_{t-1})\nabla f_t(\hat{\xb}_{t-1})}^2$.
\end{theorem}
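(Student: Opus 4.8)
The plan is to reduce the dynamic regret to a sum of squared distances and then control those distances through the interplay of the two iterate sequences $\xb_t$ and $\hat{\xb}_t$ produced by OON. Since the setup is unconstrained, $\nabla f_t(\xb_t^*)=0$, so $L$-smoothness gives $f_t(\xb_t)-f_t(\xb_t^*)\leq \frac{L}{2}\norm{\xb_t-\xb_t^*}^2$ and hence $\mathbf{Reg}^d_T\leq \frac{L}{2}\sum_{t=1}^T\norm{\xb_t-\xb_t^*}^2$. I would then split each played action around the corrected Newton iterate via $\norm{\xb_t-\xb_t^*}^2\leq 2\norm{\xb_t-\hat{\xb}_t}^2+2\norm{\hat{\xb}_t-\xb_t^*}^2$. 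The first term is exactly the prediction mismatch: reading off the update rules, $\xb_t-\hat{\xb}_t=\Hb_t^{-1}(\hat{\xb}_{t-1})\nabla f_t(\hat{\xb}_{t-1})-\Mb_t^{-1}(\hat{\xb}_{t-1})\mb_t(\hat{\xb}_{t-1})$ for $t\geq 2$, whose squared norms sum to $D^\prime_T$, while the $t=1$ mismatch is the initial step $\norm{\Hb_1^{-1}(\hat{\xb}_0)\nabla f_1(\hat{\xb}_0)}^2$. This already produces the $LD^\prime_T$ and $L\norm{\Hb_1^{-1}(\hat{\xb}_0)\nabla f_1(\hat{\xb}_0)}^2$ summands, so the crux becomes bounding $\sum_t a_t$ with $a_t\triangleq\norm{\hat{\xb}_t-\xb_t^*}^2$.

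The second step is the analysis of $\hat{\xb}_t$, which is a genuine Newton step on $f_t$ from $\hat{\xb}_{t-1}$ using true Hessian and gradient. Invoking the standard local quadratic-convergence estimate for Newton's method (using the Lipschitz-Hessian assumption and $\mu$-strong convexity) gives $\norm{\hat{\xb}_t-\xb_t^*}\leq \frac{L_H}{2\mu}\norm{\hat{\xb}_{t-1}-\xb_t^*}^2$, and the triangle inequality $\norm{\hat{\xb}_{t-1}-\xb_t^*}\leq \norm{\hat{\xb}_{t-1}-\xb_{t-1}^*}+\norm{\xb_t^*-\xb_{t-1}^*}$ links consecutive errors. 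Before converting this into a recursion, I must certify the iterates stay in the quadratic-convergence basin, which is an induction: Assumption 3 gives $\norm{\hat{\xb}_1-\xb_1^*}\leq\frac{L_H}{2\mu}\norm{\xb_1-\xb_1^*}^2\leq\frac{\mu}{2L_H}$, and if $\norm{\hat{\xb}_{t-1}-\xb_{t-1}^*}\leq\frac{\mu}{2L_H}$, then using $\norm{\xb_t^*-\xb_{t-1}^*}\leq\bar c\leq\frac{\mu}{2L_H}$ from Assumption 4, the quadratic bound yields $\norm{\hat{\xb}_t-\xb_t^*}\leq\frac{L_H}{2\mu}(\frac{\mu}{L_H})^2=\frac{\mu}{2L_H}$, closing the induction.

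With boundedness in hand, the heart of the argument is the linear recursion $a_t\leq \rho^\prime a_{t-1}+\rho^{\prime\prime}\norm{\xb_t^*-\xb_{t-1}^*}^4$. Writing $e_{t-1}=\norm{\hat{\xb}_{t-1}-\xb_{t-1}^*}$ and squaring the quadratic-convergence bound gives $a_t\leq(\frac{L_H}{2\mu})^2(e_{t-1}+\norm{\xb_t^*-\xb_{t-1}^*})^4$. I would then apply Young's inequality twice, once with parameter $c_1$ inside the square and once with $c_2$ outside, to separate the pure $e_{t-1}^4$ term from the pure $\norm{\xb_t^*-\xb_{t-1}^*}^4$ term; the latter produces precisely $\rho^{\prime\prime}\norm{\xb_t^*-\xb_{t-1}^*}^4$. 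The former equals $(\frac{L_H}{2\mu})^2(1+c_1)^2(1+c_2)e_{t-1}^4$, which I would downgrade to $\rho^\prime e_{t-1}^2=\rho^\prime a_{t-1}$ by absorbing one factor $(\frac{L_H}{2\mu})^2 e_{t-1}^2\leq\frac{1}{16}$ — exactly what the induction bound $e_{t-1}\leq\frac{\mu}{2L_H}$ supplies, which explains the $\frac{1}{16}$ in $\rho^\prime$. Telescoping over $t=2,\ldots,T$ (and using $\rho^\prime<1$) gives $\sum_{t=1}^T a_t\leq\frac{a_1-\rho^\prime a_T}{1-\rho^\prime}+\frac{\rho^{\prime\prime}}{1-\rho^\prime}\sum_{t=2}^T\norm{\xb_t^*-\xb_{t-1}^*}^4$; substituting back into the smoothness bound yields the stated inequality.

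The main obstacle is the middle step: Newton's method contracts only quadratically and only locally, so the naive recursion is $a_t\lesssim a_{t-1}^2$ rather than something summable, and it is valid only while the iterates stay near the minimizers. The whole argument hinges on using Assumptions 3 and 4 to keep every error below $\frac{\mu}{2L_H}$, which both legitimizes the quadratic-convergence estimate and, crucially, lets me trade the quartic self-term $e_{t-1}^4$ for a genuinely contractive $\rho^\prime e_{t-1}^2$; forcing $\rho^\prime$ strictly below $1$ is what makes the telescoped sum finite. The quartic path-length $C_{4,T}^*$ is not an artifact but an intrinsic consequence of squaring the quadratic-convergence bound, so I would not expect to improve the exponent without reverting to a first-order method.
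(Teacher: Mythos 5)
Your proposal is correct and follows essentially the same route as the paper's own proof: the identical regret decomposition $\norm{\xb_t-\xb_t^*}^2\leq 2\norm{\xb_t-\hat{\xb}_t}^2+2\norm{\hat{\xb}_t-\xb_t^*}^2$ isolating $D^\prime_T$ and the initial Newton step, the same quadratic-convergence lemma and boundedness induction (the paper's Lemmas \ref{L:Online Newton quadratic convergence} and \ref{L:Boundness of distance to the minimizer}), and the same two-parameter Young-inequality recursion $a_t\leq\rho^\prime a_{t-1}+\rho^{\prime\prime}\norm{\xb_t^*-\xb_{t-1}^*}^4$ followed by telescoping. The only cosmetic difference is that you square the quadratic-convergence bound fully and then absorb the factor $\left(\frac{L_H}{2\mu}\right)^2 e_{t-1}^2\leq\frac{1}{16}$, whereas the paper absorbs $\frac{L_H}{2\mu}e_t^2\leq\frac{1}{4}e_t$ before squaring; the resulting constants $\rho^\prime$ and $\rho^{\prime\prime}$ are identical.
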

The theorem indicates that when the predicted information $(\Mb_t,\mb_t)$ is close to $(\Hb_t,\nabla f_t)$, $D'_T$ would be small and the dynamic regret bound would be close to $O(C_{4,T}^*)$. On the other hand, it can also be shown that if the learner uses stale information, i.e., $\Mb_t = \Hb_{t-1}$ and $\mb_t = \nabla f_{t-1}$, the regret bound of \eqref{eq: Dynamic bound of OON.} is $O(C_{2,T}^*)$, which matches the optimal worst-case bound. 

\begin{corollary}\label{Corollary: Optimistic online Newton stale information} Suppose that the assumptions of Theorem \ref{T:Online Newton dynamic bound} hold. If for $t=2,\ldots,T$, $(\Mb_t,\mb_t)=(\Hb_{t-1}, \nabla f_{t-1})$, the dynamic regret for the sequence of actions $\xb_t$ by OON is bounded by $O(C^*_{2,T})$.
\end{corollary}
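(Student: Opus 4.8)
The plan is to invoke the bound of Theorem~\ref{T:Online Newton dynamic bound} and argue that, under stale information, every term on its right-hand side is $O(C^*_{2,T})$. First I would dispose of the terms that do not depend on the predictions. Since $c_1,c_2$ are chosen so that $0<\rho'<1$, the factors $\frac{1}{1-\rho'}$ and $\frac{\rho''}{1-\rho'}$ are positive constants; the terms $L\frac{\norm{\hat{\xb}_1-\xb_1^*}^2}{1-\rho'}$ and $L\norm{\Hb_1^{-1}(\hat{\xb}_0)\nabla f_1(\hat{\xb}_0)}^2$ depend only on the initialization and are therefore $O(1)$, while $-L\frac{\rho'}{1-\rho'}\norm{\hat{\xb}_T-\xb_T^*}^2$ is nonpositive and can be dropped. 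For the quartic term, since the minimizer sequence is bounded and $\norm{\xb_t^*-\xb_{t-1}^*}\le\bar c\le\frac{\mu}{2L_H}$, we have $\norm{\xb_t^*-\xb_{t-1}^*}^4\le\bar c^{\,2}\norm{\xb_t^*-\xb_{t-1}^*}^2$, so $\sum_t\norm{\xb_t^*-\xb_{t-1}^*}^4=C^*_{4,T}\le\bar c^{\,2}C^*_{2,T}=O(C^*_{2,T})$. Hence the statement reduces to showing that the prediction term $LD^\prime_T$ is $O(C^*_{2,T})$.

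The crux is therefore to bound $D^\prime_T$ with stale predictions. Writing $N_s(\xb)\triangleq\Hb_s^{-1}(\xb)\nabla f_s(\xb)$ for the Newton direction of $f_s$ at $\xb$, the choice $(\Mb_t,\mb_t)=(\Hb_{t-1},\nabla f_{t-1})$ turns each summand of $D^\prime_T$ into $\norm{N_{t-1}(\hat{\xb}_{t-1})-N_t(\hat{\xb}_{t-1})}^2$, i.e., the squared gap between the Newton directions of two consecutive functions \emph{evaluated at the same point} $\hat{\xb}_{t-1}$. The key algebraic identity I would use is the standard quadratic-convergence estimate: under $\mu$-strong convexity, $L$-smoothness, and the $L_H$-Lipschitz Hessian of Assumption~2, a single Newton step satisfies $\norm{\xb-N_s(\xb)-\xb_s^*}\le\frac{L_H}{2\mu}\norm{\xb-\xb_s^*}^2$. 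Rewriting this as $N_s(\xb)=(\xb-\xb_s^*)-\mathbf r_s(\xb)$ with $\norm{\mathbf r_s(\xb)}\le\frac{L_H}{2\mu}\norm{\xb-\xb_s^*}^2$ and subtracting the $s=t-1$ and $s=t$ versions at $\xb=\hat{\xb}_{t-1}$ gives
$$N_{t-1}(\hat{\xb}_{t-1})-N_t(\hat{\xb}_{t-1})=(\xb_t^*-\xb_{t-1}^*)-\mathbf r_{t-1}(\hat{\xb}_{t-1})+\mathbf r_t(\hat{\xb}_{t-1}).$$
Thus the leading term is exactly the minimizer displacement $\xb_t^*-\xb_{t-1}^*$, whose squared norms sum to $C^*_{2,T}$, and the residuals $\mathbf r_{t-1},\mathbf r_t$ are the only quantities left to control.

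To handle the residuals, I would control the tracking error $\norm{\hat{\xb}_{t-1}-\xb_{t-1}^*}$. Since $\hat{\xb}_{t-1}$ is itself produced by a Newton step on $f_{t-1}$ from $\hat{\xb}_{t-2}$, the same quadratic estimate yields $\norm{\hat{\xb}_{t-1}-\xb_{t-1}^*}\le\frac{L_H}{2\mu}\norm{\hat{\xb}_{t-2}-\xb_{t-1}^*}^2$ with $\norm{\hat{\xb}_{t-2}-\xb_{t-1}^*}\le\norm{\hat{\xb}_{t-2}-\xb_{t-2}^*}+\norm{\xb_{t-2}^*-\xb_{t-1}^*}$. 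Using Assumptions~3--4 ($\norm{\xb_1-\xb_1^*}\le\frac{\mu}{L_H}$ and $\bar c\le\frac{\mu}{2L_H}$) one shows inductively — this is essentially the invariant already established in the proof of Theorem~\ref{T:Online Newton dynamic bound} — that the tracking errors stay within the quadratic-convergence basin and remain $O(\bar c)$. Consequently $\norm{\mathbf r_{t-1}(\hat{\xb}_{t-1})}$ and $\norm{\mathbf r_t(\hat{\xb}_{t-1})}$ are $O(\norm{\hat{\xb}_{t-1}-\xb_{t-1}^*}^2)$ and $O\big((\,\norm{\hat{\xb}_{t-1}-\xb_{t-1}^*}+\norm{\xb_t^*-\xb_{t-1}^*}\,)^2\big)$, respectively; applying $(a+b+c)^2\le 3(a^2+b^2+c^2)$ and $\norm{\xb_t^*-\xb_{t-1}^*}^4\le\bar c^{\,2}\norm{\xb_t^*-\xb_{t-1}^*}^2$, every residual contribution is dominated by a constant multiple of $\norm{\xb_t^*-\xb_{t-1}^*}^2$ plus already-controlled quartic terms. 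Summing over $t$ then gives $D^\prime_T=O(C^*_{2,T})$, completing the argument.

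The main obstacle I anticipate is precisely this last step: making the bound on the tracking errors $\norm{\hat{\xb}_{t-1}-\xb_{t-1}^*}$ quantitative and uniform in $t$, so that the quadratic residuals are genuinely dominated by $\norm{\xb_t^*-\xb_{t-1}^*}^2$ rather than merely by a constant. This is where the local assumptions (the initialization bound and $\bar c\le\frac{\mu}{2L_H}$) must be used carefully to keep the iterates inside the region where Newton's quadratic convergence holds — reusing the invariant from the proof of Theorem~\ref{T:Online Newton dynamic bound} is the cleanest way to avoid re-deriving this from scratch.
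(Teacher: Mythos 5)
Your proposal is correct, and its overall skeleton matches the paper's: both invoke the bound of Theorem \ref{T:Online Newton dynamic bound}, reduce everything to showing $D^\prime_T = O(C^*_{2,T})$, and handle the resulting tracking-error sum $\sum_t \norm{\hat{\xb}_t - \xb_t^*}^2$ by reusing the bound \eqref{eq: quartic bound 3} from the theorem's proof, together with $\bar{c}\le \mu/(2L_H)$ to convert quartic sums into quadratic ones. Where you genuinely diverge is the per-summand bound on $D^\prime_T$. The paper's argument (see \eqref{eq: quartic bound 6}) is first-order: since the setup is unconstrained, $\nabla f_{t-1}(\xb^*_{t-1})=\nabla f_t(\xb^*_t)=0$, so each Newton direction equals $\Hb_s^{-1}(\hat{\xb}_{t-1})\big(\nabla f_s(\hat{\xb}_{t-1})-\nabla f_s(\xb_s^*)\big)$ and is bounded by $(L/\mu)\norm{\hat{\xb}_{t-1}-\xb_s^*}$ using only strong convexity and smoothness; a triangle inequality then extracts the displacement $\norm{\xb_t^*-\xb_{t-1}^*}$. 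Your argument is second-order: you invoke the quadratic-convergence estimate of Lemma \ref{L:Online Newton quadratic convergence} to write each Newton direction as the displacement to the minimizer minus a quadratically small residual, so the difference of the two directions is \emph{exactly} $\xb_t^*-\xb_{t-1}^*$ up to residuals. The trade-off: the paper's per-term bound needs neither $L_H$ nor the localization assumptions (those enter only through \eqref{eq: quartic bound 3} and the final quartic-to-quadratic conversion), with constants scaling as $L^2/\mu^2$; your decomposition isolates the displacement with condition-number-free constants, but requires the invariants of Lemmas \ref{L:Online Newton quadratic convergence} and \ref{L:Boundness of distance to the minimizer} already at the per-term stage, making the argument intrinsically local. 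One small imprecision: the invariant gives $\norm{\hat{\xb}_t-\xb_t^*}\le \mu/(2L_H)$, not $O(\bar{c})$ (the initial error can be of order $\mu/L_H$ no matter how small $\bar{c}$ is); this is harmless, since boundedness by $\mu/(2L_H)$ is all you need to dominate the quartic residual sums by $\sum_t\norm{\hat{\xb}_t-\xb_t^*}^2$ and $C^*_{2,T}$.
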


\section{Dynamic Regret Bound for Online Multiple Gradient Descents (OMGD)}\label{sec:4}
In this section, we revisit the OMGD algorithm developed by \citep{zhang2017improved}, outlined in Algorithm \ref{alg:OMGD}. \citet{zhang2017improved} prove that by applying multiple gradient descents, the dynamic regret bound is $O(\min\{C^*_{T}, C^*_{2,T}\})$, which basically translates to $O(C^*_{2,T})$ as soon as the minimizers are uniformly bounded. We show that the regret bound of \citet{zhang2017improved} can be made more comprehensive by including $V_T$, the variation in the function sequence. The new bound, which takes the form of $O(\min\{V_T, C^*_{2,T}\})$, is presented below. 

\begin{algorithm}[tb]
   \caption{Online Multiple Gradient Descent (OMGD) \citep{zhang2017improved}}
   \label{alg:OMGD}
\begin{algorithmic}[1]
   \STATE {\bfseries Require:} Initial vector $\xb_1\in \mathrm{R}^n$, step size $\eta$, function parameters $\mu$ and $L$.
   \FOR{$t=1,2,\ldots,T$}
   \STATE Play $\xb_t$
   \STATE Receive the information of $f_t$
   \STATE Let $\zb_{t+1}^{(0)}=\xb_t$ and $K_t=\lceil \frac{-2\log(t)}{\log(1-\frac{2\eta\mu L}{\mu+L})} \rceil$
        \FOR{$j=1,2,\ldots,K_t$}
        \STATE $\zb_{t+1}^{(j)} = \zb_{t+1}^{(j-1)} - \eta\nabla f_t(\zb_{t+1}^{(j-1)})$  
        \ENDFOR
   \STATE $\xb_{t+1} = \zb_{t+1}^{(K_t)}$
   \ENDFOR
\end{algorithmic}
\end{algorithm}

\begin{theorem}\label{T:Online multiple gradient}
Suppose that for any $t\in [T]$, the function $f_t:\mathrm{R}^n\to \mathrm{R}$ is $\mu$-strongly convex and $L$-smooth, and $\xb^*_t$ is bounded.
For any $0<\eta\leq \frac{2}{\mu+L}$, the dynamic regret of OMGD is bounded as follows 
\begin{equation}\label{eq: Bound of Multiple GD}
    \mathbf{Reg}^d_T\leq \min \begin{cases} (f_1(\xb_1)-f_1(\xb^*_1)) + 2V_T + \frac{\pi^2 D^2L}{12}\\
    \frac{L}{2}\big[\norm{\xb_1 - \xb^*_1}^2 + 2D^2(\frac{\pi^2}{6}) + 2C^*_{2,T}\big]
    \end{cases},
\end{equation}
where $D\triangleq \max_{t\in[T]}\norm{\xb_t-\xb^*_t}$, and $V_T$ is defined with respect to $\Xc$ being the convex hull of $\{\xb_t,\xb^*_t\}_{t=1}^T$.
\end{theorem}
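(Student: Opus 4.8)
The plan is to prove the two bounds inside the $\min$ separately, since the regret is trivially dominated by each. The workhorse for both is the contraction property of the inner gradient-descent loop. For a $\mu$-strongly convex, $L$-smooth function with minimizer $\xb^*$ and step size $0<\eta\leq\frac{2}{\mu+L}$, a single gradient step satisfies $\norm{\xb-\eta\nabla f(\xb)-\xb^*}^2\leq(1-\frac{2\eta\mu L}{\mu+L})\norm{\xb-\xb^*}^2$, a standard consequence of co-coercivity of the gradient (cf. \citep{nesterov1998introductory}). Iterating this $K_t$ times on $f_t$ from $\zb_{t+1}^{(0)}=\xb_t$ gives $\norm{\xb_{t+1}-\xb_t^*}^2\leq(1-\frac{2\eta\mu L}{\mu+L})^{K_t}\norm{\xb_t-\xb_t^*}^2$. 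The choice $K_t=\lceil\frac{-2\log t}{\log(1-\frac{2\eta\mu L}{\mu+L})}\rceil$ is engineered so the contraction factor is at most $t^{-2}$, yielding the central estimate $\norm{\xb_{t+1}-\xb_t^*}^2\leq D^2/t^2$ with $D=\max_t\norm{\xb_t-\xb_t^*}$; shifting the index, $\norm{\xb_t-\xb_{t-1}^*}^2\leq D^2/(t-1)^2$ for $t\geq2$.

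For the $C_{2,T}^*$ bound I would invoke $L$-smoothness together with $\nabla f_t(\xb_t^*)=0$ to get $f_t(\xb_t)-f_t(\xb_t^*)\leq\frac{L}{2}\norm{\xb_t-\xb_t^*}^2$, so $\mathbf{Reg}^d_T\leq\frac{L}{2}\sum_{t=1}^T\norm{\xb_t-\xb_t^*}^2$. For $t\geq2$, the triangle inequality and $(a+b)^2\leq2a^2+2b^2$ split $\norm{\xb_t-\xb_t^*}^2\leq2\norm{\xb_t-\xb_{t-1}^*}^2+2\norm{\xb_t^*-\xb_{t-1}^*}^2$; the first term is at most $2D^2/(t-1)^2$ by the central estimate, and the second sums to $2C_{2,T}^*$. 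Summing the first term gives $2D^2\sum_{t\geq1}t^{-2}=2D^2\frac{\pi^2}{6}$, and isolating the $t=1$ term as $\norm{\xb_1-\xb_1^*}^2$ reproduces the second line of \eqref{eq: Bound of Multiple GD} exactly.

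The harder bound is the one in terms of $V_T$, and the main obstacle is finding a decomposition of $f_t(\xb_t)-f_t(\xb_t^*)$ that exposes a small smoothness term plus function-variation terms with no uncontrolled residual. The plan is to write $f_t(\xb_t)-f_t(\xb_t^*)=[f_t(\xb_t)-f_{t-1}(\xb_t)]+[f_{t-1}(\xb_t)-f_{t-1}(\xb_{t-1}^*)]+[f_{t-1}(\xb_{t-1}^*)-f_t(\xb_t^*)]$. The first bracket is at most $\sup_{\xb}|f_t(\xb)-f_{t-1}(\xb)|$; the second is a smoothness term at the minimizer of $f_{t-1}$, hence at most $\frac{L}{2}\norm{\xb_t-\xb_{t-1}^*}^2\leq\frac{LD^2}{2(t-1)^2}$ by the central estimate. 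The crucial step is the third bracket: since $\xb_{t-1}^*$ minimizes $f_{t-1}$, I bound $f_{t-1}(\xb_{t-1}^*)\leq f_{t-1}(\xb_t^*)$, so $f_{t-1}(\xb_{t-1}^*)-f_t(\xb_t^*)\leq f_{t-1}(\xb_t^*)-f_t(\xb_t^*)\leq\sup_{\xb}|f_t(\xb)-f_{t-1}(\xb)|$. This is precisely where the factor $2$ on $V_T$ originates, as the first and third brackets each contribute one copy of the per-round variation. Summing over $t=2,\ldots,T$, using $\sum_{t\geq1}t^{-2}\leq\frac{\pi^2}{6}$ for the smoothness terms and adding the $t=1$ term $f_1(\xb_1)-f_1(\xb_1^*)$, yields the first line of \eqref{eq: Bound of Multiple GD}. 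Throughout I would take every supremum over the convex hull of $\{\xb_t,\xb_t^*\}_{t=1}^T$, which contains all evaluation points, so each variation term is legitimately dominated by the matching summand of $V_T$. Taking the minimum of the two bounds completes the argument.
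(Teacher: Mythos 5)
Your proposal is correct and follows essentially the same route as the paper's own proof: the same Nesterov-type contraction lemma for the inner loop, the same three-term decomposition (with the minimizer property of $\xb_{t-1}^*$ handling the third bracket) for the $V_T$ bound, and the same smoothness-plus-$(a+b)^2\leq 2a^2+2b^2$ splitting for the $C_{2,T}^*$ bound. The only (harmless) difference is that you track the inner-loop index more carefully, using $K_{t-1}$ so that $\norm{\xb_t-\xb_{t-1}^*}^2\leq D^2/(t-1)^2$, where the paper writes $K_t$ and $1/t^2$; both choices sum below $\pi^2/6$ and yield the stated constants.
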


\subsection{Comparison of $C^*_{2,T}$ and $V_T$}
We now show that $V_T$ and $C^*_{2,T}$ are not directly comparable to each other. Therefore, having both of them present in the regret bound can only make the bound tighter. We construct two problem environments where $C^*_{2,T} \ll V_T$ and $V_T \ll C^*_{2,T} $, respectively.

Consider the following function sequence $f_t:\mathrm{R}^n\to \mathrm{R}$
\begin{eqnarray*}
    f_t(\xb) = \begin{cases}\norm{\xb-\xb^*}^2, \text{if } t \text{ is odd}\\
    \norm{\xb-\xb^*}^2 + 1, \text{if } t \text{ is even}
    \end{cases}
\end{eqnarray*}
For this function sequence, based on the regularity definitions \eqref{eq: path length order p} and \eqref{eq: variation in funciton values}, it is clear that
\begin{eqnarray*}
    &&C^*_{2,T} = \sum_{t=2}^T\norm{\xb^*_t-\xb^*_{t-1}}^2 = \sum_{t=2}^T\norm{\xb^*-\xb^*}^2 = 0.\\
    &&V_T = \sum_{t=2}^T\sup_{\xb\in\Xc}|f_t(\xb)-f_{t-1}(\xb)| = \Theta(T).
\end{eqnarray*}
In this case, we see that $V_T$ is much larger than $C^*_{2,T}$. On the other hand, consider another function sequence $f_t:\mathrm{R}^n\to \mathrm{R}$
\begin{eqnarray*}
    f_t(\xb) = \begin{cases}\frac{\norm{\xb}^2}{t}, \text{if } t \text{ is odd}\\
    \frac{\norm{\xb-\yb}^2}{t}, \text{if } t \text{ is even}
    \end{cases}
\end{eqnarray*}
We have that
\begin{eqnarray*}
    &&C^*_{2,T} = \sum_{t=2}^T\norm{\xb^*_t-\xb^*_{t-1}}^2 = \sum_{t=2}^T\norm{\yb}^2 = \Theta(T).\\
    &&V_T = \sum_{t=2}^T\sup_{\xb\in\Xc}|f_t(\xb)-f_{t-1}(\xb)| = \sum_{t=2}^T\sup_{\xb\in\Xc}\bigg|\frac{2\xb^\top \yb + \yb^\top \yb}{t}\bigg|\leq O(\log T).
\end{eqnarray*}
We can see that $V_T$ is considerably smaller than $C^*_{2,T}$ in this scenario. With these two examples, it can be seen that if the regret bound only uses one regularity, it is possible that the resulting bound is not tight.

\section{Discussion for Constrained Setups}
In this section, we present that for OPGD and OMGD, if the function domain is assumed to be constrained, similar theoretical results can be achieved. From the aspect of the learning process, since the domain set is constrained, the projection step needs to be added (see algorithms \ref{alg:OPGD(constrained)} \& \ref{alg:OMGD(constrained)}). Note that $\Pi_{\Xc}(\yb) = \argmin_{\xb\in\Xc}\norm{\yb-\xb}$ and $\Pi_{\Xc}^{\Ab}(\yb) = \argmin_{\xb\in\Xc}\norm{\yb-\xb}_{\Ab}$, where $\norm{\xb}_{\Ab} = \sqrt{\xb^\top \Ab \xb}$ for a positive-definite matrix $\Ab$. 

\begin{algorithm}[ht]
   \caption{Online Preconditioned Gradient Descent (OPGD) for Constrained Setups}
   \label{alg:OPGD(constrained)}
\begin{algorithmic}[1]
   \STATE {\bfseries Require:} Initial vector $\xb_1\in \Xc$, step size $\eta$, a sequence of positive definite matrices $\Ab_t$
  
   \FOR{$t=1,2,\ldots,T$}
   \STATE Play $\xb_t$
   \STATE Observe the gradient of the current action $\nabla f_t(\xb_t)$
   \STATE $\xb_{t+1} = \Pi_{\Xc}^{\Ab_t}\bigg(\xb_t - \eta\Ab^{-1}_t\nabla f_t(\xb_t)\bigg)$
   \ENDFOR
\end{algorithmic}
\end{algorithm}

\begin{theorem}\label{T: Improved OPGD dynamic bound (constrained)} (The constrained version of Theorem \ref{T: Improved OPGD dynamic bound}) Suppose that for any $t\in [T]$:
\begin{enumerate}
    \item The function $f_t:\Xc\to \mathrm{R}$ is $\mu$-strongly convex and $L$-smooth over $\Xc$.
    \item The preconditioning matrix $\Ab_t$ satisfies $\lambda^\prime\cdot \Ib\preceq \Ab_t \preceq \lambda \cdot \Ib.$
    \item The condition number satisfies $\frac{\lambda}{\lambda\prime} < 1+\frac{\mu^2}{4L^2}.$
\end{enumerate}
If we set $\eta=\frac{\lambda^\prime\mu}{2L^2}$, the dynamic regret for the sequence of actions $\xb_t$ generated by OPGD is bounded as follows
\begin{eqnarray*}
\mathbf{Reg}^d_T&\leq& \left(\frac{L^2}{\mu}\right)\left(\frac{4L^2\lambda-\mu^2\lambda^\prime}{\mu^2\lambda^\prime - 4L^2\lambda + 4L^2\lambda^\prime}\right)\sum_{t=2}^{T+1} \norm{\xb_t^* - \xb_{t-1}^*}^2 + \left(\frac{L^2\lambda}{\lambda^\prime\mu} - \frac{\mu}{4}\right)\norm{\xb_1 - \xb_1^*}^2\\&+&\frac{\mu D}{2L}\sum_{t=1}^T\norm{\nabla f_t(\xb^*_t)},
\end{eqnarray*}
where $D=\max_{\xb,\yb\in\Xc}\norm{\xb-\yb}$.
\end{theorem}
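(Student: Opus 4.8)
The plan is to follow the proof of Theorem \ref{T: Improved OPGD dynamic bound} almost verbatim, introducing two modifications that account for the constraint set $\Xc$: first, the update now passes through the weighted projection $\Pi_{\Xc}^{\Ab_t}$, which is non-expansive in the $\Ab_t$-norm; second, the constrained minimizer $\xb_t^*$ no longer satisfies $\nabla f_t(\xb_t^*)=\0$, but only the first-order optimality condition $\nabla f_t(\xb_t^*)^\top(\xb-\xb_t^*)\geq 0$ for all $\xb\in\Xc$. As before, the argument has three stages: (i) a per-round contraction of $\norm{\xb_{t+1}-\xb_t^*}$ toward the current minimizer; (ii) a triangle-inequality step $\norm{\xb_{t+1}-\xb_{t+1}^*}\leq\norm{\xb_{t+1}-\xb_t^*}+\norm{\xb_{t+1}^*-\xb_t^*}$ that turns the contraction into a recursion whose driving term is the squared minimizer drift; and (iii) summation of the recursion to bound $\sum_t\norm{\xb_t-\xb_t^*}^2$ by $C_{2,T}^*$, followed by a smoothness-based conversion of this quantity into a regret bound.

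For the contraction, I would set $\yb_{t+1}=\xb_t-\eta\Ab_t^{-1}\nabla f_t(\xb_t)$, so that $\xb_{t+1}=\Pi_{\Xc}^{\Ab_t}(\yb_{t+1})$. Since $\xb_t^*\in\Xc$, non-expansiveness of the weighted projection gives $\norm{\xb_{t+1}-\xb_t^*}_{\Ab_t}^2\leq\norm{\yb_{t+1}-\xb_t^*}_{\Ab_t}^2$, and expanding the right-hand side yields $\norm{\xb_t-\xb_t^*}_{\Ab_t}^2-2\eta(\xb_t-\xb_t^*)^\top\nabla f_t(\xb_t)+\eta^2\norm{\nabla f_t(\xb_t)}_{\Ab_t^{-1}}^2$. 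Splitting $\nabla f_t(\xb_t)=(\nabla f_t(\xb_t)-\nabla f_t(\xb_t^*))+\nabla f_t(\xb_t^*)$, strong convexity lower-bounds the cross term by $\mu\norm{\xb_t-\xb_t^*}^2$ plus the non-negative quantity $(\xb_t-\xb_t^*)^\top\nabla f_t(\xb_t^*)$, which the optimality condition lets me discard; smoothness bounds $\norm{\nabla f_t(\xb_t)}$ by $L\norm{\xb_t-\xb_t^*}+\norm{\nabla f_t(\xb_t^*)}$. Using the eigenvalue bounds $\lambda^\prime\Ib\preceq\Ab_t\preceq\lambda\Ib$ and the choice $\eta=\frac{\lambda^\prime\mu}{2L^2}$, the pure quadratic part collapses to a contraction factor $\rho^2$ that is below one exactly when $\frac{\lambda}{\lambda^\prime}<1+\frac{\mu^2}{4L^2}$ (the third hypothesis), reproducing the coefficients of Theorem \ref{T: Improved OPGD dynamic bound}; the remaining pieces are linear and quadratic in $\norm{\nabla f_t(\xb_t^*)}$ and constitute the only genuinely new contribution.

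I would then carry out stages (ii)--(iii) identically to the unconstrained case: squaring the triangle inequality with a Young-type split, summing the resulting geometric recursion, and using the factor $\frac{1}{1-\rho^2}$ to expose $\sum_{t=2}^{T+1}\norm{\xb_t^*-\xb_{t-1}^*}^2$ together with the initial-point term, with the stated coefficients. The regret itself is obtained from $L$-smoothness, $f_t(\xb_t)-f_t(\xb_t^*)\leq\nabla f_t(\xb_t^*)^\top(\xb_t-\xb_t^*)+\frac{L}{2}\norm{\xb_t-\xb_t^*}^2$; the quadratic part feeds into the $C_{2,T}^*$ machinery above, while the inner product is at most $\norm{\nabla f_t(\xb_t^*)}\norm{\xb_t-\xb_t^*}\leq D\norm{\nabla f_t(\xb_t^*)}$, since $\xb_t,\xb_t^*\in\Xc$.

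The main obstacle, and the one place where the unconstrained proof genuinely breaks, is the bookkeeping of the $\nabla f_t(\xb_t^*)$ terms so that they collapse into the single clean correction $\frac{\mu D}{2L}\sum_{t=1}^T\norm{\nabla f_t(\xb_t^*)}$ rather than a messier mixture. The strategy is to exploit the sign information from optimality to delete every inner-product term carrying the favorable sign, and to linearize the surviving gradient-norm contributions using $\norm{\xb_t-\xb_t^*}\leq D$, so that the quadratic-in-gradient piece coming from the step-size expansion is absorbed into the linear one; tracking how these terms propagate through the geometric series and combine with the smoothness conversion is exactly what manufactures the prefactor $\frac{\mu}{2L}$. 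Care is needed to verify that these terms do not perturb the contraction factor itself, so that $\frac{\lambda}{\lambda^\prime}<1+\frac{\mu^2}{4L^2}$ remains the sole requirement for convergence of the series and the $C_{2,T}^*$-coefficient is inherited unchanged from Theorem \ref{T: Improved OPGD dynamic bound}.
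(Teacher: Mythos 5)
Your plan does not follow the paper's proof; it substitutes a genuinely different (and here fatally weaker) conversion step, and the two places you flag as ``bookkeeping'' are exactly where it breaks. The paper's proof of Theorem \ref{T: Improved OPGD dynamic bound} is not an iterate-contraction argument: it converts regret through strong convexity at the \emph{iterate}, $f_t(\xb_t)-f_t(\xb_t^*)\leq \nabla f_t(\xb_t)^\top(\xb_t-\xb_t^*)-\frac{\mu}{2}\norm{\xb_t-\xb_t^*}^2$, and then uses the projected update (via the Pythagorean inequality for $\Pi_{\Xc}^{\Ab_t}$, the one step you and the paper share) to bound $\sum_t\nabla f_t(\xb_t)^\top(\xb_t-\xb_t^*)$. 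In that route the gradient at the constrained minimizer enters only through the expansion of $\frac{\eta}{2}\nabla f_t(\xb_t)^\top\Ab_t^{-1}\nabla f_t(\xb_t)$ with $\nabla f_t(\xb_t)=\big(\nabla f_t(\xb_t)-\nabla f_t(\xb_t^*)\big)+\nabla f_t(\xb_t^*)$: the cross term is at most $\frac{\eta LD}{\lambda^\prime}\norm{\nabla f_t(\xb_t^*)}$, and since $\eta=\frac{\lambda^\prime\mu}{2L^2}$ this prefactor is exactly $\frac{\mu D}{2L}$. That small prefactor is a direct consequence of the gradient term being multiplied by the step size; it is not something one can recover after the fact.

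Your route cannot reach the stated bound for three concrete reasons. First, your regret conversion is by smoothness, $f_t(\xb_t)-f_t(\xb_t^*)\leq \nabla f_t(\xb_t^*)^\top(\xb_t-\xb_t^*)+\frac{L}{2}\norm{\xb_t-\xb_t^*}^2$, and the optimality condition gives $\nabla f_t(\xb_t^*)^\top(\xb_t-\xb_t^*)\geq 0$ --- the \emph{wrong} sign for discarding a term from an upper bound. You are therefore forced to keep it and bound it by $D\norm{\nabla f_t(\xb_t^*)}$ per round, so your gradient-norm coefficient is at least $D$, which no amount of tracking through the geometric series can shrink to $\frac{\mu D}{2L}$. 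Second, your claimed absorption of the quadratic gradient piece fails: after $\norm{\nabla f_t(\xb_t)}\leq L\norm{\xb_t-\xb_t^*}+\norm{\nabla f_t(\xb_t^*)}$ is squared, the recursion carries a pure $\frac{\eta^2}{\lambda^{\prime 2}}\norm{\nabla f_t(\xb_t^*)}^2$ term that has no iterate factor to bound by $D$; since no uniform bound on $\norm{\nabla f_t(\xb_t^*)}$ is assumed, it survives as $\sum_t\norm{\nabla f_t(\xb_t^*)}^2$ in your final bound, a term absent from the theorem and not dominated by the linear one. (This term is the true crux: the paper itself eliminates the analogous $\nabla f_t(\xb_t^*)^\top\Ab_t^{-1}\nabla f_t(\xb_t^*)$ only through an expansion in which that term appears with a minus sign, which correct algebra does not support --- so it is emphatically not a detail that routine care settles.) Third, even setting the gradient terms aside, the contraction-plus-Young-plus-geometric-series machinery produces coefficients of the form $\frac{L}{2}\cdot\frac{1+1/\alpha}{1-(1+\alpha)\rho^2}$ on $C^*_{2,T}$, which do not reproduce the theorem's exact constants $\big(\frac{L^2}{\mu}\big)\big(\frac{4L^2\lambda-\mu^2\lambda^\prime}{\mu^2\lambda^\prime-4L^2\lambda+4L^2\lambda^\prime}\big)$ and $\big(\frac{L^2\lambda}{\lambda^\prime\mu}-\frac{\mu}{4}\big)$; those arise from the paper's specific choice $c=\frac{\mu^2}{4L^2}-\frac{\lambda}{\lambda^\prime}+1$ inside its inner-product route, so your assertion that they are ``inherited unchanged'' is unsupported.
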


\begin{theorem}\label{T:Online multiple gradient (constrained)}
Suppose that for any $t\in [T]$, the function $f_t:\Xc\to \mathrm{R}$ is $\mu$-strongly convex and $L$-smooth.
For any $0<\eta\leq \frac{1}{L}$, the dynamic regret of OMGD is bounded as follows 
\begin{equation}\label{eq: Bound of Multiple GD (constrained)}
    \mathbf{Reg}^d_T\leq \min \begin{cases} (f_1(\xb_1)-f_1(\xb^*_1)) + 2V_T + \frac{\pi^2 D^2L}{12} + D\sum_{t=2}^T\norm{\nabla f_{t-1}(\xb^*_{t-1})}\\
    \frac{L}{2}\big[\norm{\xb_1 - \xb^*_1}^2 + 2D^2(\frac{\pi^2}{6}) + 2C^*_{2,T}\big]+ D\sum_{t=1}^T\norm{\nabla f_t (\xb^*_t)}
    \end{cases},
\end{equation}
where $D\triangleq \max_{\xb,\yb\in\Xc}\norm{\xb-\yb}$.
\end{theorem}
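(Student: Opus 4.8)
The plan is to mirror the two-branch structure of the unconstrained Theorem~\ref{T:Online multiple gradient}, treating the $C^*_{2,T}$ branch and the $V_T$ branch separately and taking their minimum. In both branches the single algorithmic ingredient I need is the contraction of projected gradient descent: for a $\mu$-strongly convex, $L$-smooth $f$ with constrained minimizer $\xb^*$, one step $\xb^+=\Pi_{\Xc}(\xb-\eta\nabla f(\xb))$ with $0<\eta\le 1/L$ contracts toward $\xb^*$. Since $\xb^*$ is a fixed point of the projected-gradient map and $\Pi_{\Xc}$ is non-expansive, $\norm{\xb^+-\xb^*}\le\norm{(\xb-\xb^*)-\eta(\nabla f(\xb)-\nabla f(\xb^*))}\le(1-\eta\mu)\norm{\xb-\xb^*}$ by co-coercivity. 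Consequently the inner-loop length $K_t$ prescribed by the projected OMGD guarantees $\norm{\xb_{t+1}-\xb^*_t}^2\le t^{-2}\norm{\xb_t-\xb^*_t}^2\le D^2/t^2$, exactly as in the unconstrained analysis. The only new feature relative to the unconstrained case is that $\nabla f_t(\xb^*_t)$ need no longer vanish, and tracking where this quantity enters is what produces the extra gradient sums.

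For the $C^*_{2,T}$ branch I would start from $L$-smoothness expanded at $\xb^*_t$, giving $f_t(\xb_t)-f_t(\xb^*_t)\le \nabla f_t(\xb^*_t)^\top(\xb_t-\xb^*_t)+\frac{L}{2}\norm{\xb_t-\xb^*_t}^2$; Cauchy--Schwarz together with $\norm{\xb_t-\xb^*_t}\le D$ bounds the linear term by $D\norm{\nabla f_t(\xb^*_t)}$, and summing over $t$ produces $D\sum_{t=1}^T\norm{\nabla f_t(\xb^*_t)}$. It remains to bound $\sum_t\norm{\xb_t-\xb^*_t}^2$: via the triangle inequality $\norm{\xb_t-\xb^*_t}\le\norm{\xb_t-\xb^*_{t-1}}+\norm{\xb^*_{t-1}-\xb^*_t}$, the contraction estimate $\norm{\xb_t-\xb^*_{t-1}}\le D/(t-1)$, and $(a+b)^2\le 2a^2+2b^2$, I obtain $\sum_{t=1}^T\norm{\xb_t-\xb^*_t}^2\le\norm{\xb_1-\xb^*_1}^2+2D^2\sum_{t\ge2}(t-1)^{-2}+2C^*_{2,T}\le\norm{\xb_1-\xb^*_1}^2+2D^2\frac{\pi^2}{6}+2C^*_{2,T}$, which after multiplying by $L/2$ reproduces the second branch.

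For the $V_T$ branch I would telescope the instantaneous regret through the previous function. Writing the identity $f_t(\xb_t)-f_t(\xb^*_t)=[f_t(\xb_t)-f_{t-1}(\xb_t)]+[f_{t-1}(\xb_t)-f_{t-1}(\xb^*_{t-1})]+[f_{t-1}(\xb^*_{t-1})-f_t(\xb^*_t)]$, I bound the first bracket by the per-round variation $v_t\triangleq\sup_{\xb\in\Xc}|f_t(\xb)-f_{t-1}(\xb)|$; the third bracket also by $v_t$, using $f_{t-1}(\xb^*_{t-1})\le f_{t-1}(\xb^*_t)$ (optimality of $\xb^*_{t-1}$ for $f_{t-1}$) followed by $|f_{t-1}(\xb^*_t)-f_t(\xb^*_t)|\le v_t$; and the middle bracket, which measures how well the $K_{t-1}$ inner steps optimize $f_{t-1}$, by $L$-smoothness at $\xb^*_{t-1}$ as $\frac{L}{2}\norm{\xb_t-\xb^*_{t-1}}^2+D\norm{\nabla f_{t-1}(\xb^*_{t-1})}\le\frac{LD^2}{2(t-1)^2}+D\norm{\nabla f_{t-1}(\xb^*_{t-1})}$. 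Summing for $t\ge2$ and isolating $f_1(\xb_1)-f_1(\xb^*_1)$ at $t=1$ yields $2V_T+\frac{\pi^2 LD^2}{12}+D\sum_{t=2}^T\norm{\nabla f_{t-1}(\xb^*_{t-1})}$, which is the first branch.

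The conceptual content is light; the main obstacle is bookkeeping. I would need to establish the projected-gradient contraction cleanly (the fixed-point characterization of $\xb^*$ plus non-expansiveness of $\Pi_{\Xc}$, which is exactly where the step-size restriction $\eta\le 1/L$ replaces the unconstrained $\eta\le 2/(\mu+L)$), and verify that the inner-loop count $K_t$ in the projected OMGD still forces the $t^{-2}$ iterate accuracy. I would also double-check the two gradient-sum index ranges: the $C^*_{2,T}$ branch accrues $\norm{\nabla f_t(\xb^*_t)}$ for $t=1,\dots,T$ from the direct smoothness expansion, whereas the $V_T$ branch accrues $\norm{\nabla f_{t-1}(\xb^*_{t-1})}$ for $t=2,\dots,T$ from the near-optimality of $\xb_t$ on $f_{t-1}$, matching the statement exactly.
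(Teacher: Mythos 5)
Your proposal is correct and follows essentially the same route as the paper's proof: the identical three-term decomposition $f_t(\xb_t)-f_t(\xb^*_t)=[f_t(\xb_t)-f_{t-1}(\xb_t)]+[f_{t-1}(\xb_t)-f_{t-1}(\xb^*_{t-1})]+[f_{t-1}(\xb^*_{t-1})-f_t(\xb^*_t)]$ for the $V_T$ branch, the identical smoothness-at-$\xb^*_t$ expansion plus triangle-inequality/contraction argument for the $C^*_{2,T}$ branch, and the identical tracking of the $D\norm{\nabla f_t(\xb^*_t)}$ and $D\norm{\nabla f_{t-1}(\xb^*_{t-1})}$ sums. The only departure is that you derive the projected-gradient contraction yourself (fixed point of the projected-gradient map, non-expansiveness of $\Pi_{\Xc}$, and the strongly convex co-coercivity inequality, giving per-step factor $1-\eta\mu$ in distance) whereas the paper invokes Lemma \ref{L: Convergence of GD in iterates for strongly convex and smooth (constrained)} with squared-distance factor $1-\frac{2\mu}{1/\eta+\mu}$; this is compatible with the algorithm's prescribed $K_t$ precisely because $(1-\eta\mu)^2\leq\frac{1-\eta\mu}{1+\eta\mu}=1-\frac{2\mu}{1/\eta+\mu}$, the verification you correctly flagged as needed.
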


For Theorems \ref{T: Improved OPGD dynamic bound (constrained)} \& \ref{T:Online multiple gradient (constrained)}, if $\xb^*_t$ is located in the relative interior of $\Xc$ (i.e., $\nabla f_t(\xb^*_t)=0$) for any $t\in [T]$, the theoretical results of the unconstrained case can be recovered.

\begin{algorithm}[ht]
   \caption{Online Multiple Gradient Descent (OMGD) \citep{zhang2017improved} for Constrained Setups}
   \label{alg:OMGD(constrained)}
\begin{algorithmic}[1]
   \STATE {\bfseries Require:} Initial vector $\xb_1\in \Xc$, step size $\eta$, function parameters $\mu$ and $L$.
   \FOR{$t=1,2,\ldots,T$}
   \STATE Play $\xb_t$
   \STATE Receive the information of $f_t$
   \STATE Let $\zb_{t+1}^{(0)}=\xb_t$ and $K_t=\lceil \frac{-2\log(t)}{\log(1-\frac{2\mu}{1/\eta+\mu})} \rceil$
        \FOR{$j=1,2,\ldots,K_t$}
        \STATE $\zb_{t+1}^{(j)} = \Pi_{\Xc}\bigg(\zb_{t+1}^{(j-1)} - \eta\nabla f_t(\zb_{t+1}^{(j-1)})\bigg)$  
        \ENDFOR
   \STATE $\xb_{t+1} = \zb_{t+1}^{(K_t)}$
   \ENDFOR
\end{algorithmic}
\end{algorithm}

\section{Concluding Remarks}
In this paper, we analyzed the dynamic regret for unconstrained online optimization, where the function sequence is strongly convex and smooth. We first proposed online preconditioned gradient descent (OPGD), which achieves the optimal regret bound of $O(C^*_{2,T})$ with one gradient query per round. Next, we developed optimistic online Newton (OON) method, which uses predictions of Hessians and gradients in the update process. We proved a (local) dynamic regret bound scaling as $O(D^\prime_T+C^*_{4,T})$, where $D^\prime_T$ measures the dissimilarity between the predicted information and the true information. If $D^\prime_T$ is small, this algorithm provides an improvement over the regret rate of $O(C^*_{2,T})$. We further verified that a conservative learner, that uses stale (previous round) information, always incurs a regret that is no worse than $O(C^*_{2,T})$, recovering the optimal worst-case. We finally revisited the online multiple gradient descent (OMGD) algorithm of \citep{zhang2017improved} and provided complementary analysis that shows the regret rate of $O(\min\{V_T, C^*_{2,T}\})$ for OMGD. The main benefit of the bound is that $V_T$ and $C_{2,T}^*$ are not comparable, and including $V_T$ in the regret bound can only make it tighter. 

Our results mainly provide the following insight about dynamic online learning. It is unlikely that the smoothness assumption alone can provide improved regret rates over convex setting. This is in contrast to classical optimization that deals with {\it time-invariant} functions, where smoothness does provide faster convergence rates for optimization algorithms (e.g., gradient descent). In dynamic online setting, to decide on action $\xb_t$, the learner can only use the information of $\{f_s\}_{s=1}^{t-1}$; however, the consequence of this action is evaluated at $f_t$. Hence, the adversary can always change the function sequence drastically so that the learner suffers a large loss. Therefore, the assumption of strong convexity comes to rescue by translating the closeness of objective values to the closeness of actions. As a result, with smoothness and strong convexity together, dynamic regret bounds can be considerably better than the convex case. Given the improvement of OON over OGD when the predictable sequence is accurate enough, it would be interesting to see whether we can leverage predicted higher-order information to further improve the dynamic regret bound.


\newpage

\section{Appendix}

\begin{proof}[\textbf{Proof of Theorem \ref{T: Improved OPGD dynamic bound}}]
Recall the update rule of Algorithm \ref{alg:OPGD}: $\xb_{t+1} = \xb_t - \eta\Ab_t^{-1}\nabla f_t(\xb_t)$. Subtracting $\xb_t^*$ on both sides and computing the norm induced by $\Ab_t$, we get
\begin{equation}\label{eq:OPGD proof eq1}
\begin{gathered}
    (\xb_{t+1} - \xb_t^*)^\top\Ab_t(\xb_{t+1} - \xb_t^*) = \\ (\xb_{t} - \xb_t^*)^\top\Ab_t(\xb_{t} - \xb_t^*) - 2\eta\nabla f_t(\xb_t)^\top(\xb_{t} - \xb_t^*) + \eta^2\nabla f_t(\xb_t)^\top\Ab_t^{-1}\nabla f_t(\xb_t). 
\end{gathered}
\end{equation}
Dividing both sides by $2\eta$ and rearranging the equation, we get
\begin{equation}\label{eq:OPGD proof eq2}
\begin{gathered}
    \nabla f_t(\xb_t)^\top(\xb_{t} - \xb_t^*) = \\
     \frac{\eta}{2}\nabla f_t(\xb_t)^\top\Ab_t^{-1}\nabla f_t(\xb_t) + \frac{1}{2\eta}(\xb_{t} - \xb_t^*)^\top\Ab_t(\xb_{t} - \xb_t^*) - \frac{1}{2\eta}(\xb_{t+1} - \xb_t^*)^\top\Ab_t(\xb_{t+1} - \xb_t^*).
\end{gathered}
\end{equation}

Due to $\lambda^\prime\cdot \Ib \preceq \Ab_t \preceq \lambda\cdot\Ib$, we have
\begin{equation}\label{eq:OPGD proof eq3}
    \frac{1}{2\eta}(\xb_{t} - \xb_t^*)^\top\Ab_t(\xb_{t} - \xb_t^*)\leq \frac{\lambda}{2\eta} \norm{\xb_{t} - \xb_t^*}^2. 
\end{equation}
On the other hand, 
\begin{eqnarray*}
    \norm{\xb_{t+1} - \xb^*_t}^2_{\Ab_t}&\geq&\lambda^\prime\norm{\xb_{t+1}-\xb^*_t}^2 \nonumber\\
    &\geq&\lambda^\prime(\norm{\xb_{t+1} - \xb^*_{t+1}}-\norm{\xb^*_{t+1} - \xb^*_{t}})^2 \nonumber\\
    &=&\lambda^\prime\bigg(\norm{\xb_{t+1} - \xb^*_{t+1}}^2-2\norm{\xb_{t+1} - \xb^*_{t+1}}\norm{\xb^*_{t+1} - \xb^*_{t}} + \norm{\xb^*_{t+1} - \xb^*_{t}}^2\bigg)\nonumber\\
    &\geq& \lambda^\prime\bigg[(1-c)\norm{\xb_{t+1} - \xb^*_{t+1}}^2 + (1-\frac{1}{c})\norm{\xb^*_{t+1} - \xb^*_{t}}^2\bigg],
\end{eqnarray*}
where the second inequality comes from the triangle inequality, and the third inequality comes from the fact that $2ab\leq ca^2 + \frac{1}{c}b^2,\quad \forall c>0$. Based on the result above, we have
\begin{equation}\label{eq:OPGD proof eq4}
    - \frac{1}{2\eta}(\xb_{t+1} - \xb_t^*)^\top\Ab_t(\xb_{t+1} - \xb_t^*)\leq - \frac{\lambda^\prime}{2\eta}\bigg[(1-c)\norm{\xb_{t+1} - \xb^*_{t+1}}^2 + (1-\frac{1}{c})\norm{\xb^*_{t+1} - \xb^*_{t}}^2\bigg].
\end{equation}\\
Combining \eqref{eq:OPGD proof eq2}, \eqref{eq:OPGD proof eq3} and \eqref{eq:OPGD proof eq4}, we get
\begin{equation}\label{eq:OPGD proof eq5}
\begin{split}
    \nabla f_t(\xb_t)^\top(\xb_{t} - \xb_t^*)&\leq \frac{\eta}{2}\nabla f_t(\xb_t)^\top\Ab_t^{-1}\nabla f_t(\xb_t) + \frac{\lambda}{2\eta} \norm{\xb_{t} - \xb_t^*}^2\\ &- \frac{\lambda^\prime}{2\eta}\bigg[(1-c)\norm{\xb_{t+1} - \xb^*_{t+1}}^2 + (1-\frac{1}{c})\norm{\xb^*_{t+1} - \xb^*_{t}}^2\bigg].
\end{split}
\end{equation}
Summing \eqref{eq:OPGD proof eq5} over $t\in [T]$, we get
\begin{equation}\label{eq:OPGD proof eq6}
\begin{split}
    \sum_{t=1}^T\nabla f_t(\xb_t)^\top (\xb_t - \xb_t^*)&\leq \frac{\eta}{2}\sum_{t=1}^T \nabla f_t(\xb_t)^\top\Ab_t^{-1}\nabla f_t(\xb_t) + \frac{1}{2\eta}(\lambda-\lambda^\prime(1-c))\sum^T_{t=1}\norm{\xb_t - \xb_t^*}^2\\ &+ \frac{\lambda^\prime}{2\eta}(\frac{1}{c}-1)\sum^T_{t=1}\norm{\xb_{t+1}^* - \xb_t^*}^2 + \frac{\lambda^\prime(1-c)}{2\eta}\norm{\xb_1-\xb_1^*}^2\\&-\frac{\lambda^\prime(1-c)}{2\eta}\norm{\xb_{T+1} - \xb_{T+1}^*}^2.
\end{split}
\end{equation}
By making $c\in (0,1)$, the term $\frac{\lambda^\prime(1-c)}{2\eta}\norm{\xb_{T+1} - \xb_{T+1}^*}^2$ can be dropped from \eqref{eq:OPGD proof eq6}. 

As the functions are assumed to be unconstrained, the gradient at the minimizer is zero, and\\ $\nabla f_t(\xb_t)^\top\Ab_t^{-1}\nabla f_t(\xb_t)$ can be bounded using $L$-smoothness as follows
\begin{eqnarray*}
    \nabla f_t(\xb_t)^\top\Ab_t^{-1}\nabla f_t(\xb_t) &=& (\nabla f_t(\xb_t)-\nabla f_t(\xb_t^*))^\top\Ab_t^{-1}(\nabla f_t(\xb_t)-\nabla f_t(\xb_t^*))\\
    &\leq&\frac{L^2}{\lambda^\prime}\norm{\xb_t-\xb_t^*}^2.
\end{eqnarray*}
Substituting above into \eqref{eq:OPGD proof eq6} and rearranging it, we get
\begin{equation}\label{eq:OPGD proof eq7}
\begin{split}
    &\sum_{t=1}^T\nabla f_t(\xb_t)^\top (\xb_t - \xb_t^*)
    -\bigg[\frac{\lambda-\lambda^\prime(1-c)}{2\eta}+\frac{\eta L^2}{2\lambda^\prime}\bigg]\sum_{t=1}^T\norm{\xb_t-\xb_t^*}^2\\\leq &\frac{\lambda^\prime}{2\eta}(\frac{1}{c}-1)\sum^T_{t=1}\norm{\xb_{t+1}^* - \xb_t^*}^2 + \frac{\lambda^\prime(1-c)}{2\eta}\norm{\xb_1-\xb_1^*}^2.
\end{split}
\end{equation}
Based on the assumption of strong convexity, we have the following relationship
$$\mathbf{Reg}^d_T=\sum_{t=1}^Tf_t(\xb_t)-f_t(\xb_t^*)\leq\sum_{t=1}^T\nabla f_t(\xb_t)^\top (\xb_t-\xb_t^*)-\frac{\mu}{2}\norm{\xb_t-\xb_t^*}^2.$$
Therefore, in view of \eqref{eq:OPGD proof eq7}, if
\begin{equation}\label{eq:OPGD proof eq8}
    \bigg[\frac{\lambda-\lambda^\prime(1-c)}{\eta}+\frac{\eta L^2}{\lambda^\prime}\bigg]\leq \mu,
\end{equation}
then $\mathbf{Reg}^d_T\leq \frac{\lambda^\prime}{2\eta}(\frac{1}{c}-1)\sum^T_{t=1}\norm{\xb_{t+1}^* - \xb_t^*}^2 + \frac{\lambda^\prime(1-c)}{2\eta}\norm{\xb_1-\xb_1^*}^2$.\\\\
Optimizing the left hand side of \eqref{eq:OPGD proof eq8} over $\eta$, we have the following relation
$$
\sqrt{\lambda-\lambda^\prime(1-c)} \leq \frac{\mu}{2L}\sqrt{\lambda^\prime}
$$
for $\eta=\frac{\sqrt{\lambda\lambda^\prime-{\lambda^\prime}^2(1-c)}}{L}$.
By setting $c = \frac{\mu^2}{4L^2}-\frac{\lambda}{\lambda^\prime}+1$, the inequality above holds, and $\eta = \frac{\mu\lambda^\prime}{2L^2}$. Note that the choice of $c$ satisfies the constraint $c\in (0,1)$ as $\frac{\lambda}{\lambda^\prime}<\frac{\mu^2}{4L^2}+1$ by assumption.

By substituting expressions of $c$ and $\eta$ into \eqref{eq:OPGD proof eq7}, the theorem is proved.
\end{proof}

In order to prove Theorem \ref{T:Online Newton dynamic bound}, we need the following two lemmas.\\

\begin{lemma}\label{L:Online Newton quadratic convergence}
Suppose the assumptions of Theorem \ref{T:Online Newton dynamic bound} hold. Then, we have the following inequality
\begin{equation}\label{eq: quadratic convergence}
\norm{\hat{\xb}_{t+1} - \xb_{t+1}^*}\leq \frac{L_H}{2\mu}\norm{\hat{\xb}_{t} - \xb_{t+1}^*}^2,    
\end{equation}
for $t=0,\ldots,T$.
\begin{proof}
The proof closely follows the local quadratic convergence of Newton method. Consider step 5 of Algorithm \ref{alg:OON}. By subtracting $\xb_{t+1}^*$ on both sides, we get
    \begin{eqnarray*}
        \hat{\xb}_{t+1} - \xb_{t+1}^* &=& \hat{\xb}_{t} - \xb_{t+1}^* - \Hb_{t+1}^{-1}(\hat{\xb}_{t})\nabla f_{t+1}(\hat{\xb}_{t})\\
        &=& \hat{\xb}_{t} - \xb_{t+1}^* + \Hb_{t+1}^{-1}(\hat{\xb}_{t})(\nabla f_{t+1}(\xb_{t+1}^*)-\nabla f_{t+1}(\hat{\xb}_{t})).\\
    \end{eqnarray*}
    The equation above can be written as 
    \begin{eqnarray*}
        \hat{\xb}_{t+1} - \xb_{t+1}^* &=& \hat{\xb}_{t} - \xb_{t+1}^* + \Hb_{t+1}^{-1}(\hat{\xb}_{t})\int_0^1 \Hb_{t+1}(\hat{\xb}_{t} + \tau (\xb^*_{t+1}-\hat{\xb}_{t}))(\xb^*_{t+1}-\hat{\xb}_{t})d\tau\\
        &=& \Hb_{t+1}^{-1}(\hat{\xb}_{t})\Hb_{t+1}(\hat{\xb}_{t})(\hat{\xb}_{t} - \xb_{t+1}^*)\\ 
        &+& \Hb_{t+1}^{-1}(\hat{\xb}_{t})\int_0^1 \Hb_{t+1}(\hat{\xb}_{t} + \tau (\xb^*_{t+1}-\hat{\xb}_{t}))(\xb^*_{t+1}-\hat{\xb}_{t})d\tau\\
        &=& \Hb_{t+1}^{-1}(\hat{\xb}_{t})\int_0^1 \Big[\Hb_{t+1}(\hat{\xb}_{t} + \tau (\xb^*_{t+1}-\hat{\xb}_{t})) - \Hb_{t+1}(\hat{\xb}_{t})\Big]\Big(\xb^*_{t+1}-\hat{\xb}_{t}\Big)d\tau.
    \end{eqnarray*}
    By computing the norms of both sides, we get
    \begin{eqnarray*}
        \norm{\hat{\xb}_{t+1} - \xb_{t+1}^*} &=& \norm{\Hb_{t+1}^{-1}(\hat{\xb}_{t})\int_0^1 \Big[\Hb_{t+1}(\hat{\xb}_{t} + \tau (\xb^*_{t+1}-\hat{\xb}_{t})) - \Hb_{t+1}(\hat{\xb}_{t})\Big]\Big(\xb^*_{t+1}-\hat{\xb}_{t}\Big)d\tau}\\
        &\leq& \norm{\Hb_{t+1}^{-1}(\hat{\xb}_{t})} \int_0^1 \norm{\Hb_{t+1}(\hat{\xb}_{t} + \tau (\xb^*_{t+1}-\hat{\xb}_{t})) - \Hb_{t+1}(\hat{\xb}_{t})}\norm{\xb^*_{t+1}-\hat{\xb}_{t}}d\tau\\
        &\leq& \norm{\Hb_{t+1}^{-1}(\hat{\xb}_{t})} \int_0^1 \tau L_H\norm{\hat{\xb}_{t}-\xb_{t+1}^*}^2d\tau\\
        &\leq& \frac{L_H}{2\mu}\norm{\hat{\xb}_{t}-\xb_{t+1}^*}^2,
    \end{eqnarray*}
    where the second inequality comes from the Lipschitz continuity in Hessian and the third inequality comes from strong convexity.
\end{proof}
\end{lemma}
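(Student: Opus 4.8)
The plan is to reproduce the classical local quadratic convergence argument for Newton's method, transplanted to the correction step (step 5) of Algorithm \ref{alg:OON}, which for the relevant index reads $\hat{\xb}_{t+1} = \hat{\xb}_{t} - \Hb_{t+1}^{-1}(\hat{\xb}_{t})\nabla f_{t+1}(\hat{\xb}_{t})$. I would track the residual $\hat{\xb}_{t+1} - \xb_{t+1}^*$ and show it contracts quadratically in $\hat{\xb}_t - \xb_{t+1}^*$, with a constant governed by the Hessian-Lipschitz parameter $L_H$ and the strong-convexity floor $\mu$.

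First I would subtract $\xb_{t+1}^*$ from both sides of the update. Since we work in the unconstrained regime, $\xb_{t+1}^*$ is a stationary point, so $\nabla f_{t+1}(\xb_{t+1}^*) = 0$; this lets me rewrite $-\Hb_{t+1}^{-1}(\hat{\xb}_t)\nabla f_{t+1}(\hat{\xb}_t)$ as $\Hb_{t+1}^{-1}(\hat{\xb}_t)\big(\nabla f_{t+1}(\xb_{t+1}^*) - \nabla f_{t+1}(\hat{\xb}_t)\big)$. Next I would invoke the fundamental theorem of calculus to express the gradient difference as $\int_0^1 \Hb_{t+1}\big(\hat{\xb}_t + \tau(\xb_{t+1}^* - \hat{\xb}_t)\big)(\xb_{t+1}^* - \hat{\xb}_t)\, d\tau$. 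The crux is the \emph{insert-and-cancel} step: I would write the leading term $\hat{\xb}_t - \xb_{t+1}^*$ as $\Hb_{t+1}^{-1}(\hat{\xb}_t)\Hb_{t+1}(\hat{\xb}_t)(\hat{\xb}_t - \xb_{t+1}^*)$ and fold it into the integral so that the two first-order pieces cancel, leaving $\hat{\xb}_{t+1} - \xb_{t+1}^* = \Hb_{t+1}^{-1}(\hat{\xb}_t)\int_0^1 \big[\Hb_{t+1}(\hat{\xb}_t + \tau(\xb_{t+1}^* - \hat{\xb}_t)) - \Hb_{t+1}(\hat{\xb}_t)\big](\xb_{t+1}^* - \hat{\xb}_t)\, d\tau$. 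This is exactly the form in which only the \emph{variation} of the Hessian along the segment appears.

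To finish, I would take norms, pull out $\norm{\Hb_{t+1}^{-1}(\hat{\xb}_t)} \le 1/\mu$ (from $\mu$-strong convexity) by submultiplicativity, bound the Hessian increment by the Lipschitz-in-Hessian property, and note that the displacement between the integration point and $\hat{\xb}_t$ is $\tau(\xb_{t+1}^* - \hat{\xb}_t)$, so the increment is controlled by $\tau L_H \norm{\hat{\xb}_t - \xb_{t+1}^*}$. Integrating $\int_0^1 \tau\, d\tau = 1/2$ then produces the claimed constant $\frac{L_H}{2\mu}$.

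The step I expect to require the most care is the Hessian-Lipschitz bound inside the integral. A clean $\tau L_H$ factor follows at once if the Hessian is \emph{globally} Lipschitz, i.e. $\norm{\Hb_{t+1}(\ub) - \Hb_{t+1}(\vb)} \le L_H\norm{\ub - \vb}$ for all $\ub,\vb$, since then the increment depends only on $\tau(\xb_{t+1}^* - \hat{\xb}_t)$. If instead one uses only the stated hypothesis (Lipschitzness measured relative to the minimizer $\xb_{t+1}^*$), a triangle-inequality detour through $\Hb_{t+1}(\xb_{t+1}^*)$ replaces the factor $\tau$ by $(2-\tau)$ and inflates the constant to $\frac{3L_H}{2\mu}$, so I would either adopt the global Hessian-Lipschitz reading to recover $\frac{L_H}{2\mu}$ or carry the adjusted constant through. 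Everything else—the cancellation and the integral evaluation—is routine once the residual is cast in the Hessian-difference form.
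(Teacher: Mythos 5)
Your proposal is correct and follows essentially the same route as the paper's proof: subtract $\xb_{t+1}^*$, use $\nabla f_{t+1}(\xb_{t+1}^*)=0$, apply the fundamental theorem of calculus, cancel the first-order terms via $\Hb_{t+1}^{-1}(\hat{\xb}_t)\Hb_{t+1}(\hat{\xb}_t)$, and then bound $\norm{\Hb_{t+1}^{-1}(\hat{\xb}_t)}\leq 1/\mu$ and the Hessian increment by $\tau L_H\norm{\hat{\xb}_t-\xb_{t+1}^*}$ before integrating. Your caveat about the Lipschitz hypothesis is well placed: the paper's own proof silently invokes the \emph{global} Hessian-Lipschitz bound between $\hat{\xb}_t$ and $\hat{\xb}_t+\tau(\xb_{t+1}^*-\hat{\xb}_t)$ (neither of which is the minimizer), which is stronger than the stated assumption $\norm{\Hb_t(\xb)-\Hb_t(\xb_t^*)}\leq L_H\norm{\xb-\xb_t^*}$, so under the literal assumption your triangle-inequality detour with the inflated constant $\frac{3L_H}{2\mu}$ is the honest conclusion.
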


The following result is close to Lemma 3 of \citep{lesage2020second}, but since our update is slightly different, we provide the proof for completeness. 

\begin{lemma}\label{L:Boundness of distance to the minimizer}
Suppose the assumptions of Theorem \ref{T:Online Newton dynamic bound} holds. Then, we have the following inequality
\begin{equation}\label{eq: boundness}
    \norm{\hat{\xb}_{t} - \xb_{t+1}^*}\leq \frac{\mu}{L_H},
\end{equation}
for $t=0,\ldots,T$.
\begin{proof} 
The proof is by induction. When $t=0$, we have $\norm{\hat{\xb}_{0} - \xb_{1}^*} = \norm{\xb_1-\xb_1^*}\leq \frac{\mu}{L_H}$. Suppose $\norm{\hat{\xb}_{t} - \xb_{t+1}^*}\leq \frac{\mu}{L_H}$ holds. Now for the case $t+1$, we have the following relation
    \begin{eqnarray*}
        \norm{\hat{\xb}_{t+1} - \xb_{t+2}^*} &\leq& \norm{\hat{\xb}_{t+1} - \xb_{t+1}^*} + \norm{\xb_{t+1}^* - \xb_{t+2}^*}\\
        &\leq& \frac{L_H}{2\mu}\norm{\hat{\xb}_{t} - \xb_{t+1}^*}^2 + \bar{c}\\
        &\leq& \frac{L_H}{2\mu}\left(\frac{\mu}{L_H}\right)^2 + \frac{\mu}{2L_H}\\
        &=&\frac{\mu}{L_H},
    \end{eqnarray*}
    where the second inequality comes from Lemma \ref{L:Online Newton quadratic convergence}. 
\end{proof}
\end{lemma}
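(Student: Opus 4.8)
The plan is to establish $\norm{\hat{\xb}_t - \xb_{t+1}^*}\leq \frac{\mu}{L_H}$ by induction on $t$, using the local quadratic convergence estimate of Lemma \ref{L:Online Newton quadratic convergence} together with the assumed bound on the local variation of the minimizer sequence. For the base case $t=0$, I would use the initialization $\hat{\xb}_0 = \xb_1$ from Algorithm \ref{alg:OON}, so that $\norm{\hat{\xb}_0 - \xb_1^*} = \norm{\xb_1 - \xb_1^*}$, which is at most $\frac{\mu}{L_H}$ by the third assumption of Theorem \ref{T:Online Newton dynamic bound}.

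For the inductive step, assuming $\norm{\hat{\xb}_t - \xb_{t+1}^*}\leq \frac{\mu}{L_H}$, the key idea is to split $\norm{\hat{\xb}_{t+1} - \xb_{t+2}^*}$ via the triangle inequality into a \emph{within-round} convergence term $\norm{\hat{\xb}_{t+1} - \xb_{t+1}^*}$ and a \emph{between-round} drift term $\norm{\xb_{t+1}^* - \xb_{t+2}^*}$. The first term is controlled directly by Lemma \ref{L:Online Newton quadratic convergence}, which bounds it by $\frac{L_H}{2\mu}\norm{\hat{\xb}_t - \xb_{t+1}^*}^2$; substituting the inductive hypothesis at its boundary value $\frac{\mu}{L_H}$ yields exactly $\frac{\mu}{2L_H}$. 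The second term is at most $\bar{c}\leq \frac{\mu}{2L_H}$ by the fourth assumption of Theorem \ref{T:Online Newton dynamic bound}. Adding the two contributions gives precisely $\frac{\mu}{L_H}$, which closes the induction.

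The step I expect to require the most care is not analytic but rather the index bookkeeping and the verification that the chosen constants align. The inductive hypothesis concerns the distance of $\hat{\xb}_t$ to the \emph{next} round's minimizer $\xb_{t+1}^*$, which is exactly the quantity appearing on the right-hand side of Lemma \ref{L:Online Newton quadratic convergence}, so the hypothesis plugs in without any shift. The only genuine content is that the threshold $\frac{\mu}{2L_H}$ imposed on the local variation $\bar{c}$ is tuned precisely so that the quadratic-convergence term and the drift term each contribute at most half of the target bound $\frac{\mu}{L_H}$. There is no obstacle beyond confirming that these two halves combine to keep every iterate inside the quadratic-convergence region, which is exactly what the induction propagates.
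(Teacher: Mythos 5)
Your proposal is correct and follows essentially the same route as the paper's own proof: induction on $t$ with base case $\norm{\hat{\xb}_0 - \xb_1^*} = \norm{\xb_1 - \xb_1^*}\leq \frac{\mu}{L_H}$, then the triangle-inequality split of $\norm{\hat{\xb}_{t+1} - \xb_{t+2}^*}$ into the within-round term controlled by Lemma \ref{L:Online Newton quadratic convergence} (contributing $\frac{\mu}{2L_H}$ at the boundary of the inductive hypothesis) and the drift term controlled by $\bar{c}\leq \frac{\mu}{2L_H}$. Your index bookkeeping is also right: the inductive hypothesis is exactly the quantity appearing in the quadratic-convergence bound, with no shift needed, just as in the paper.
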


\begin{proof}[\textbf{Proof of Theorem \ref{T:Online Newton dynamic bound}}]
First, we bound the term $\norm{\hat{\xb}_{t+1} - \xb_{t+1}^*}$ as follows
\begin{eqnarray*}
    \norm{\hat{\xb}_{t+1} - \xb_{t+1}^*} &\leq& \frac{L_H}{2\mu}\norm{\hat{\xb}_{t} - \xb_{t+1}^*}^2\\
    &\leq& \frac{L_H}{2\mu} \bigg[(1+c_1)\norm{\hat{\xb}_{t} - \xb_{t}^*}^2 + \left(1+\frac{1}{c_1}\right)\norm{\xb_{t}^* - \xb_{t+1}^*}^2\bigg]\\
    &\leq& \frac{1}{4}(1+c_1)\norm{\hat{\xb}_{t} - \xb_{t}^*} + \frac{L_H}{2\mu}\left(1+\frac{1}{c_1}\right)\norm{\xb_{t}^* - \xb_{t+1}^*}^2.
\end{eqnarray*}
The first inequality is based on equation \eqref{eq: quadratic convergence}; the second inequality is based on the fact that $(a+b)^2\leq (1+c)a^2 + (1+\frac{1}{c})b^2,\quad \forall c>0$, and the third inequality is based on the fact that combining \eqref{eq: quadratic convergence} and \eqref{eq: boundness}, we can get $\norm{\hat{\xb}_t - \xb^*_t}\leq \frac{\mu}{2L_H}$.\\\\
Applying the inequality $(a+b)^2\leq (1+c)a^2 + (1+\frac{1}{c})b^2,\quad \forall c>0$ to the above inequality again, we can find {\small
\begin{equation}\label{quartic bound 1}
    \begin{split}
    \norm{\hat{\xb}_{t+1} - \xb_{t+1}^*}^2 &\leq \bigg[ \frac{1}{4}(1+c_1)\norm{\hat{\xb}_{t} - \xb_{t}^*} + \frac{L_H}{2\mu}\left(1+\frac{1}{c_1}\right)\norm{\xb_{t}^* - \xb_{t+1}^*}^2\bigg]^2\\
    &\leq \frac{1}{16}(1+c_1)^2(1+c_2)\norm{\hat{\xb}_{t} - \xb_{t}^*}^2 + \left(\frac{L_H}{2\mu}\right)^2\left(1+\frac{1}{c_1}\right)^2\left(1+\frac{1}{c_2}\right)\norm{\xb_{t}^* - \xb_{t+1}^*}^4.
    \end{split}
\end{equation}}
Let $\rho^\prime\triangleq\frac{1}{16}(1+c_1)^2(1+c_2)$ and $\rho^{\prime\prime}\triangleq(\frac{L_H}{2\mu})^2(1+\frac{1}{c_1})^2(1+\frac{1}{c_2})$.\\\\
Summing the term $\norm{\hat{\xb}_{t+1} - \xb_{t+1}^*}^2$ over $t$, based on \eqref{quartic bound 1}, we get
\begin{eqnarray*}
    \sum_{t=1}^T\norm{\hat{\xb}_t - \xb_t^*}^2&\leq& \norm{\hat{\xb}_1 - \xb_1^*}^2 + \rho^\prime\sum_{t=1}^{T-1}\norm{\hat{\xb}_t - \xb_t^*}^2 + \rho^{\prime\prime}\sum_{t=2}^{T}\norm{\xb_{t}^* - \xb_{t-1}^*}^4.
\end{eqnarray*}
Adding and subtracting $\rho^\prime\norm{\hat{\xb}_T - \xb_T}^2$ to the right hand side, we get
\begin{equation}\label{eq: quartic bound 2}
    \sum_{t=1}^T\norm{\hat{\xb}_t - \xb_t^*}^2\leq \norm{\hat{\xb}_1 - \xb_1^*}^2 -\rho^\prime\norm{\hat{\xb}_T - \xb_T}^2 + \rho^\prime\sum_{t=1}^{T}\norm{\hat{\xb}_t - \xb_t^*}^2 + \rho^{\prime\prime}\sum_{t=2}^{T}\norm{\xb_{t}^* - \xb_{t-1}^*}^4.
\end{equation}
Choosing any $c_1,c_2>0$ such that $\rho^\prime<1$, we can regroup \eqref{eq: quartic bound 2} to derive
\begin{equation}\label{eq: quartic bound 3}
    \sum_{t=1}^T\norm{\hat{\xb}_t - \xb_t^*}^2\leq \frac{\norm{\hat{\xb}_1 - \xb_1^*}^2 -\rho^\prime\norm{\hat{\xb}_T - \xb_T}^2}{1-\rho\prime} + \frac{\rho^{\prime\prime}}{1-\rho^\prime}\sum_{t=2}^{T}\norm{\xb_{t}^* - \xb_{t-1}^*}^4.
\end{equation}
Now let us return to the loss difference $f_t(\xb_t) - f_t(\xb_t^*)$. Based on the assumption of smoothness, we have
\begin{eqnarray*}
f_t(\xb_t) - f_t(\xb_t^*) \leq \frac{L}{2}\norm{\xb_t - \xb_t^*}^2.
\end{eqnarray*}
Summing the above inequality over $t\in [T]$, we get
\begin{equation}\label{eq: quartic bound 4}
    \begin{split}
    \mathbf{Reg}^d_T &= \sum_{t=1}^T f_t(\xb_t) - f_t(\xb_t^*)\\
    &\leq \frac{L}{2}\sum_{t=1}^T\norm{\xb_t - \xb_t^*}^2\\
    &\leq \frac{L}{2}\sum_{t=1}^T (2\norm{\xb_t-\hat{\xb}_t}^2 + 2\norm{\hat{\xb}_t - \xb_t^*}^2)\\
    &\leq L\sum_{t=1}^T\norm{\xb_t-\hat{\xb}_t}^2 + L\bigg(\frac{\norm{\hat{\xb}_1 - \xb_1^*}^2 -\rho^\prime\norm{\hat{\xb}_T - \xb_T}^2}{1-\rho\prime} + \frac{\rho^{\prime\prime}}{1-\rho^\prime}\sum_{t=2}^{T}\norm{\xb_{t}^* - \xb_{t-1}^*}^4\bigg)\\
    &=L\bigg(\frac{\norm{\hat{\xb}_1 - \xb_1^*}^2 -\rho^\prime\norm{\hat{\xb}_T - \xb_T}^2}{1-\rho\prime} + \frac{\rho^{\prime\prime}}{1-\rho^\prime}\sum_{t=2}^{T}\norm{\xb_{t}^* - \xb_{t-1}^*}^4\bigg)\\
    &+ L\norm{\Hb_{1}^{-1}(\hat{\xb}_{0})\nabla f_1(\hat{\xb}_{0})}^2 +L\sum_{t=2}^T \norm{\Mb_t^{-1}(\hat{\xb}_{t-1})\mb_t(\hat{\xb}_{t-1})-\Hb_{t}^{-1}(\hat{\xb}_{t-1})\nabla f_t(\hat{\xb}_{t-1})}^2,
    \end{split}
\end{equation}
where we used \eqref{eq: quartic bound 3} as well as steps 5-6 of Algorithm \ref{alg:OON}.  
\end{proof}
\begin{proof}[\textbf{Proof of Corollary \ref{Corollary: Optimistic online Newton stale information}}]
Based on \eqref{eq: quartic bound 4}, we have that
\begin{equation}\label{eq: quartic bound 5}
\begin{split}
    \mathbf{Reg}^d_T &\leq L\bigg(\frac{\norm{\hat{\xb}_1 - \xb_1^*}^2 -\rho^\prime\norm{\hat{\xb}_T - \xb_T}^2}{1-\rho\prime} + \frac{\rho^{\prime\prime}}{1-\rho^\prime}\sum_{t=2}^{T}\norm{\xb_{t}^* - \xb_{t-1}^*}^4\bigg)\\
    &+ L\norm{\Hb_{1}^{-1}(\hat{\xb}_{0})\nabla f_1(\hat{\xb}_{0})}^2 +L\sum_{t=2}^T \norm{\Mb_t^{-1}(\hat{\xb}_{t-1})\mb_t(\hat{\xb}_{t-1})-\Hb_{t}^{-1}(\hat{\xb}_{t-1})\nabla f_t(\hat{\xb}_{t-1})}^2.
\end{split}
\end{equation}
For last term, by replacing $(\Mb_t,\mb_t)$ with $(\Hb_{t-1}, \nabla f_{t-1})$, we obtain 
\begin{equation}\label{eq: quartic bound 6}
\begin{split}
    &\norm{\Mb_t^{-1}(\hat{\xb}_{t-1})\mb_t(\hat{\xb}_{t-1})-\Hb_{t}^{-1}(\hat{\xb}_{t-1})\nabla f_t(\hat{\xb}_{t-1})}^2\\ = &\norm{\Hb_{t-1}^{-1}(\hat{\xb}_{t-1})\nabla f_{t-1}(\hat{\xb}_{t-1})-\Hb_{t}^{-1}(\hat{\xb}_{t-1})\nabla f_t(\hat{\xb}_{t-1})}^2\\
    =&\norm{\Hb_{t-1}^{-1}(\hat{\xb}_{t-1})\big(\nabla f_{t-1}(\hat{\xb}_{t-1})-\nabla f_{t-1}(\xb^*_{t-1})\big)-\Hb_{t}^{-1}(\hat{\xb}_{t-1})\big(\nabla f_t(\hat{\xb}_{t-1})-\nabla f_t(\xb^*_{t})\big)}^2\\
    \leq &2\norm{\Hb_{t-1}^{-1}(\hat{\xb}_{t-1})\big(\nabla f_{t-1}(\hat{\xb}_{t-1})-\nabla f_{t-1}(\xb^*_{t-1})\big)}^2 + 2\norm{\Hb_{t}^{-1}(\hat{\xb}_{t-1})\big(\nabla f_t(\hat{\xb}_{t-1})-\nabla f_t(\xb^*_{t})\big)}^2\\
    \leq &\frac{2L^2}{\mu^2}\big(\norm{\hat{\xb}_{t-1} - \xb^*_{t-1}}^2+\norm{\hat{\xb}_{t-1} - \xb^*_{t}}^2\big)\\
    \leq &\frac{2L^2}{\mu^2}\norm{\hat{\xb}_{t-1} - \xb^*_{t-1}}^2 + \frac{4L^2}{\mu^2}\norm{\hat{\xb}_{t-1} - \xb^*_{t-1}}^2 + \frac{4L^2}{\mu^2}\norm{\xb^*_t - \xb^*_{t-1}}^2\\
    =&\frac{6L^2}{\mu^2}\norm{\hat{\xb}_{t-1} - \xb^*_{t-1}}^2 + \frac{4L^2}{\mu^2}\norm{\xb^*_t - \xb^*_{t-1}}^2,
\end{split}
\end{equation}
where the second inequality comes from strong convexity and smoothness.\\\\
Summing \eqref{eq: quartic bound 6} over $t=2,\ldots,T$, we get
\begin{equation}\label{eq: quartic bound 7}
\begin{split}
    &\sum_{t=2}^T\norm{\Mb_t^{-1}(\hat{\xb}_{t-1})\mb_t(\hat{\xb}_{t-1})-\Hb_{t}^{-1}(\hat{\xb}_{t-1})\nabla f_t(\hat{\xb}_{t-1})}^2\\
    \leq &\frac{6L^2}{\mu^2}\sum_{t=1}^{T-1}\norm{\hat{\xb}_t-\xb^*_t}^2 + \frac{4L^2}{\mu^2}\sum_{t=2}^T\norm{\xb^*_t-\xb^*_{t-1}}^2\\
    \leq &\frac{6L^2}{\mu^2}\bigg(\frac{\norm{\hat{\xb}_1 - \xb_1^*}^2 -\rho^\prime\norm{\hat{\xb}_T - \xb_T}^2}{1-\rho\prime} + \frac{\rho^{\prime\prime}}{1-\rho^\prime}\sum_{t=2}^{T}\norm{\xb_{t}^* - \xb_{t-1}^*}^4\bigg) + \frac{4L^2}{\mu^2}\sum_{t=2}^T\norm{\xb^*_t-\xb^*_{t-1}}^2,
\end{split}    
\end{equation}
where the second inequality comes from \eqref{eq: quartic bound 3}.\\\\
Substituting \eqref{eq: quartic bound 7} into \eqref{eq: quartic bound 5}, we have 
\begin{equation}\label{eq: quartic bound 8}
\begin{split}
    \mathbf{Reg}^d_T &\leq \frac{6L^3+\mu^2  L}{\mu^2}\bigg(\frac{\norm{\hat{\xb}_1 - \xb^*_1}^2}{1-\rho^\prime} + \frac{\rho^{\prime\prime}}{1-\rho^\prime}\sum_{t=2}^{T}\norm{\xb_{t}^* - \xb_{t-1}^*}^4\bigg) + L\norm{\Hb_{1}^{-1}(\hat{\xb}_{0})\nabla f_1(\hat{\xb}_{0})}^2\\
    &+ \frac{4L^3}{\mu^2}\sum_{t=2}^T\norm{\xb^*_t-\xb^*_{t-1}}^2\\
    &=O(C^*_{2,T}),
\end{split}
\end{equation}
by noting that $\max_{t\in\{2,\ldots,T\}}\norm{\xb_t^* - \xb_{t-1}^*} \leq \frac{\mu}{2L_H}$.
\end{proof}

\begin{lemma}\label{L: Convergence of GD in iterates for strongly convex and smooth} (Theorem 2.1.14 in \citep{nesterov1998introductory})
Suppose that $f:\mathrm{R}^n\to \mathrm{R}$ is $\mu$-strongly convex and $L$-smooth. Letting $\xb^*=\argmin_{\xb\in\mathrm{R}^n}f(\xb)$ and $\vb=\ub-\eta\nabla f(\ub)$, where $0<\eta\leq \frac{2}{\mu+L}$, then we have
$$\norm{\vb-\xb^*}^2\leq \big(1-\frac{2\eta\mu L}{\mu+L}\big)\norm{\ub-\xb^*}^2.$$

\end{lemma}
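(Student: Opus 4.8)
The plan is to exploit that $\xb^*$ is a fixed point of the gradient map: since $\nabla f(\xb^*)=0$ we have $\xb^*=\xb^*-\eta\nabla f(\xb^*)$, so $\vb-\xb^*=(\ub-\xb^*)-\eta(\nabla f(\ub)-\nabla f(\xb^*))$. Expanding the squared Euclidean norm gives
$$\norm{\vb-\xb^*}^2=\norm{\ub-\xb^*}^2-2\eta(\nabla f(\ub)-\nabla f(\xb^*))^\top(\ub-\xb^*)+\eta^2\norm{\nabla f(\ub)-\nabla f(\xb^*)}^2,$$
so the whole argument reduces to lower-bounding the inner-product (cross) term.

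The key ingredient I would invoke is the strong-monotonicity inequality valid for any $\mu$-strongly convex and $L$-smooth $f$,
$$(\nabla f(\xb)-\nabla f(\yb))^\top(\xb-\yb)\geq\frac{\mu L}{\mu+L}\norm{\xb-\yb}^2+\frac{1}{\mu+L}\norm{\nabla f(\xb)-\nabla f(\yb)}^2.$$
Setting $\yb=\xb^*$ and substituting into the expansion above yields
$$\norm{\vb-\xb^*}^2\leq\Big(1-\frac{2\eta\mu L}{\mu+L}\Big)\norm{\ub-\xb^*}^2+\eta\Big(\eta-\frac{2}{\mu+L}\Big)\norm{\nabla f(\ub)-\nabla f(\xb^*)}^2.$$
The final step is to note that the step-size constraint $0<\eta\leq\frac{2}{\mu+L}$ forces $\eta(\eta-\frac{2}{\mu+L})\leq 0$, so the last term is non-positive and may be dropped, leaving exactly the claimed contraction.

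The main obstacle is establishing the strong-monotonicity inequality rather than the short algebra built on top of it. To derive it I would pass to the auxiliary function $\phi(\xb)\triangleq f(\xb)-\frac{\mu}{2}\norm{\xb}^2$, which is convex (by $\mu$-strong convexity of $f$) and $(L-\mu)$-smooth; for convex $\beta$-smooth functions the standard co-coercivity bound $(\nabla\phi(\xb)-\nabla\phi(\yb))^\top(\xb-\yb)\geq\frac{1}{L-\mu}\norm{\nabla\phi(\xb)-\nabla\phi(\yb)}^2$ holds, and substituting $\nabla\phi(\xb)=\nabla f(\xb)-\mu\xb$ and collecting the quadratic-, cross-, and gradient-norm terms recovers the desired inequality. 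Since this is precisely Theorem 2.1.12 of \citep{nesterov1998introductory}, one could alternatively cite it directly and proceed immediately to the expansion-and-drop argument above. The only degenerate case is $L=\mu$ (i.e.\ $f$ a quadratic), where $\phi$ has zero curvature and the inequality is verified directly.
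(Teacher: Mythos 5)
Your proof is correct and is essentially the same argument as the one the paper relies on: the paper proves nothing itself but cites Theorem 2.1.14 of \citep{nesterov1998introductory}, whose textbook proof is exactly your expansion of $\norm{\vb-\xb^*}^2$ combined with the strong-monotonicity (co-coercivity) inequality of Theorem 2.1.12, including its derivation via the convex, $(L-\mu)$-smooth function $\phi(\xb)=f(\xb)-\frac{\mu}{2}\norm{\xb}^2$. Nothing is missing; dropping the non-positive term $\eta\big(\eta-\frac{2}{\mu+L}\big)\norm{\nabla f(\ub)-\nabla f(\xb^*)}^2$ under the stated step-size restriction completes the claim exactly as in the reference.
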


\begin{proof}[\textbf{Proof of Theorem \ref{T:Online multiple gradient}}]
Observe that
\begin{equation}\label{eq: OMGD proof eq1}
\begin{split}
    f_t(\xb_t) - f_t(\xb^*_t) &= \big(f_t(\xb_t) - f_{t-1}(\xb_t)\big)\\
    &+ \big(f_{t-1}(\xb_t) - f_{t-1}(\xb^*_{t-1})\big)\\
    &+ \big(f_{t-1}(\xb^*_{t-1}) - f_t(\xb^*_t)\big).
\end{split}
\end{equation}
For the right hand side of \eqref{eq: OMGD proof eq1}, the first term is bounded as the following
\begin{equation}\label{eq: OMGD proof eq2}
    f_t(\xb_t) - f_{t-1}(\xb_t)\leq \sup_{\xb\in \Xc}|f_t(\xb) - f_{t-1}(\xb)|,    
\end{equation}
and since $\xb^*_{t-1}$ is the minimizer of $f_{t-1}$, we have 
\begin{equation}\label{eq: OMGD proof eq3}
    f_{t-1}(\xb^*_{t-1}) - f_t(\xb^*_t) \leq f_{t-1}(\xb^*_{t}) - f_t(\xb^*_t) \leq \sup_{\xb\in \Xc}|f_t(\xb) - f_{t-1}(\xb)|,
\end{equation}
where $\Xc$ in \eqref{eq: OMGD proof eq2} and \eqref{eq: OMGD proof eq3} is defined as the convex hull of $\{\xb_t,\xb^*_t\}_{t=1}^T$.\\\\
Based on the update rule of Algorithm \ref{alg:OMGD} and Lemma \ref{L: Convergence of GD in iterates for strongly convex and smooth}, we have 
\begin{equation}\label{eq: OMGD proof eq4}
    \norm{\xb_t - \xb^*_{t-1}}^2\leq \big(1-\frac{2\eta\mu L}{\mu+L}\big)^{K_t} \norm{\xb_{t-1} - \xb^*_{t-1}}^2\leq \big(1-\frac{2\eta\mu L}{\mu+L}\big)^{K_t} D^2,
\end{equation}
where $D\triangleq \max_{t=1,\ldots, T}\norm{\xb_t-\xb^*_t}$.\\\\
Then, for the second term of \eqref{eq: OMGD proof eq1}, based on smoothness and \eqref{eq: OMGD proof eq4}, we have
\begin{equation}\label{eq: OMGD proof eq5}
    f_{t-1}(\xb_t) - f_{t-1}(\xb^*_{t-1})\leq \frac{L}{2}\norm{\xb_t-\xb^*_{t-1}}^2\leq \frac{L}{2}\big(1-\frac{2\eta\mu L}{\mu+L}\big)^{K_t} D^2,
\end{equation}
Combining \eqref{eq: OMGD proof eq2}, \eqref{eq: OMGD proof eq3} and \eqref{eq: OMGD proof eq5} with \eqref{eq: OMGD proof eq1} and summing over $t$, we have 
\begin{equation}\label{eq: OMGD proof eq6}
\begin{split}
    \sum_{t=1}^T f_t(\xb_t)-f_t(\xb^*_t)&\leq (f_1(\xb_1)-f_1(\xb^*_1)) + 2\sum_{t=2}^T \sup_{\xb\in \Xc}|f_t(\xb) - f_{t-1}(\xb)|\\ &+ \sum_{t=2}^T\frac{D^2L}{2} \big(1-\frac{2\eta\mu L}{\mu+L}\big)^{K_t}\\
    &\leq (f_1(\xb_1)-f_1(\xb^*_1)) + 2V_T + \frac{D^2L}{2}\sum_{t=2}^T\frac{1}{t^2}\\
    &\leq (f_1(\xb_1)-f_1(\xb^*_1)) + 2V_T + \frac{D^2L}{2}(\frac{\pi^2}{6})\\
    &= O(V_T + 1),
\end{split}
\end{equation}
where the second inequality holds since $t^2\big(1-\frac{2\eta\mu L}{\mu+L}\big)^{K_t}\leq 1$ based on the choice of $K_t$.\\\\
On the other hand, from \eqref{eq: OMGD proof eq4} we have that
\begin{equation}\label{eq: OMGD proof eq7}
\begin{split}
    \norm{\xb_t - \xb^*_t}^2&\leq 2\norm{\xb_t - \xb^*_{t-1}}^2 + 2\norm{\xb^*_{t-1}-\xb^*_t}^2\\
    &\leq 2\big(1-\frac{2\eta\mu L}{\mu+L}\big)^{K_t}\norm{\xb_{t-1} - \xb^*_{t-1}}^2 + 2\norm{\xb^*_{t-1}-\xb^*_t}^2\\
    &\leq 2\big(1-\frac{2\eta\mu L}{\mu+L}\big)^{K_t} D^2 + 2\norm{\xb^*_{t-1}-\xb^*_t}^2.
\end{split}
\end{equation}
Summing \eqref{eq: OMGD proof eq7} over $t=2,\ldots,T$ and adding $\norm{\xb_1 - \xb^*_1}^2 $ to both sides, we have 
\begin{equation}\label{eq: OMGD proof eq8}
\begin{split}
    \sum_{t=1}^T \norm{\xb_t - \xb^*_t}^2&\leq \norm{\xb_1 - \xb^*_1}^2 + 2D^2\sum_{t=2}^T\big(1-\frac{2\eta\mu L}{\mu+L}\big)^{K_t}  + 2\sum_{t=2}^T\norm{\xb^*_t-\xb^*_{t-1}}^2\\
    &\leq \norm{\xb_1 - \xb^*_1}^2 + 2D^2(\frac{\pi^2}{6}) + 2\sum_{t=2}^T\norm{\xb^*_t-\xb^*_{t-1}}^2.
\end{split}    
\end{equation}
Based on smoothness and the fact that $\norm{\nabla f_t(\xb^*_t)}=0$, we know
$$\sum_{t=1}^T f_t(\xb_t) - f_t(\xb^*_t)\leq \sum_{t=1}^T\frac{L}{2}\norm{\xb_t-\xb^*_t}^2.$$
Substituting \eqref{eq: OMGD proof eq8} into the inequality above, we have
\begin{equation}\label{eq: OMGD proof eq9}
\begin{split}
    \sum_{t=1}^T f_t(\xb_t) - f_t(\xb^*_t)&\leq \sum_{t=1}^T\frac{L}{2}\norm{\xb_t-\xb^*_t}^2\\
    &\leq \frac{L}{2}\big[\norm{\xb_1 - \xb^*_1}^2 + 2D^2(\frac{\pi^2}{6}) + 2\sum_{t=2}^T\norm{\xb^*_t-\xb^*_{t-1}}^2\big]\\
    &= O(C^*_{2,T}+1).
\end{split}
\end{equation}
\end{proof}

\begin{proof}[\textbf{Proof of Theorem \ref{T: Improved OPGD dynamic bound (constrained)}}]
This proof mainly follows the proof for Theorem \ref{T: Improved OPGD dynamic bound}. Therefore, we would just point out some different steps here. Recall the update rule of Algorithm \ref{alg:OPGD(constrained)}: $\xb_{t+1} = \Pi_{\Xc}\bigg(\xb_t - \eta\Ab_t^{-1}\nabla f_t(\xb_t)\bigg)$. Letting $\yb_{t+1} = \xb_t - \eta\Ab_t^{-1}\nabla f_t(\xb_t)$, based on the Pythagorean theorem (see section 2.1.1 in \citep{hazan2019introduction}) and \eqref{eq:OPGD proof eq1}, we get
\begin{equation}\label{eq:OPGD (constrained) proof eq1}
\begin{gathered}
    (\xb_{t+1} - \xb_t^*)^\top\Ab_t(\xb_{t+1} - \xb_t^*) \leq (\yb_{t+1} - \xb_t^*)^\top\Ab_t(\yb_{t+1} - \xb_t^*) = \\ (\xb_{t} - \xb_t^*)^\top\Ab_t(\xb_{t} - \xb_t^*) - 2\eta\nabla f_t(\xb_t)^\top(\xb_{t} - \xb_t^*) + \eta^2\nabla f_t(\xb_t)^\top\Ab_t^{-1}\nabla f_t(\xb_t). 
\end{gathered}
\end{equation}

Following the same steps in the proof of Theorem \ref{T: Improved OPGD dynamic bound}, we get the inequality like \eqref{eq:OPGD proof eq6}
\begin{equation}\label{eq:OPGD (constrained) proof eq2}
\begin{split}
    \sum_{t=1}^T\nabla f_t(\xb_t)^\top (\xb_t - \xb_t^*)&\leq \frac{\eta}{2}\sum_{t=1}^T \nabla f_t(\xb_t)^\top\Ab_t^{-1}\nabla f_t(\xb_t) + \frac{1}{2\eta}(\lambda-\lambda^\prime(1-c))\sum^T_{t=1}\norm{\xb_t - \xb_t^*}^2\\ &+ \frac{\lambda^\prime}{2\eta}(\frac{1}{c}-1)\sum^T_{t=1}\norm{\xb_{t+1}^* - \xb_t^*}^2 + \frac{\lambda^\prime(1-c)}{2\eta}\norm{\xb_1-\xb_1^*}^2.
\end{split}
\end{equation}

Now, using the $L$-smoothness, $\nabla f_t(\xb_t)^\top\Ab_t^{-1}\nabla f_t(\xb_t)$ can be bounded as follows
\begin{eqnarray*}
    \nabla f_t(\xb_t)^\top\Ab_t^{-1}\nabla f_t(\xb_t) &=& (\nabla f_t(\xb_t)-\nabla f_t(\xb_t^*)+\nabla f_t(\xb_t^*))^\top\Ab_t^{-1}(\nabla f_t(\xb_t)-\nabla f_t(\xb_t^*)+\nabla f_t(\xb_t^*))\\
    &=&(\nabla f_t(\xb_t)-\nabla f_t(\xb_t^*))^\top\Ab_t^{-1}(\nabla f_t(\xb_t)-\nabla f_t(\xb_t^*))\\
    &+& 2(\nabla f_t(\xb_t)-\nabla f_t(\xb_t^*))^\top\Ab_t^{-1}\nabla f_t(\xb_t^*) - \nabla f_t(\xb_t^*)^\top\Ab_t^{-1}f_t(\xb_t^*)\\
    &\leq&(\nabla f_t(\xb_t)-\nabla f_t(\xb_t^*))^\top\Ab_t^{-1}(\nabla f_t(\xb_t)-\nabla f_t(\xb_t^*))\\
    &+& 2(\nabla f_t(\xb_t)-\nabla f_t(\xb_t^*))^\top\Ab_t^{-1}\nabla f_t(\xb_t^*)\\
    &\leq&\frac{L^2}{\lambda^\prime}\norm{\xb_t-\xb_t^*}^2 + \frac{2L}{\lambda^\prime}D\norm{\nabla f_t(\xb_t^*)},
\end{eqnarray*}
where $D=\max_{\xb,\yb\in\Xc}\norm{\xb-\yb}$.

Substituting above into \eqref{eq:OPGD (constrained) proof eq2} and rearranging it, we get
\begin{equation}\label{eq:OPGD (constrained) proof eq3}
\begin{split}
    &\sum_{t=1}^T\nabla f_t(\xb_t)^\top (\xb_t - \xb_t^*)
    -\bigg[\frac{\lambda-\lambda^\prime(1-c)}{2\eta}+\frac{\eta L^2}{2\lambda^\prime}\bigg]\sum_{t=1}^T\norm{\xb_t-\xb_t^*}^2\\\leq &\frac{\lambda^\prime}{2\eta}(\frac{1}{c}-1)\sum^T_{t=1}\norm{\xb_{t+1}^* - \xb_t^*}^2 + \frac{\lambda^\prime(1-c)}{2\eta}\norm{\xb_1-\xb_1^*}^2+\frac{\eta\lambda D}{\lambda^\prime}\sum_{t=1}^T\norm{\nabla f_t(\xb^*_t)}.
\end{split}
\end{equation}
Following the rest steps in the proof of Theorem \ref{T: Improved OPGD dynamic bound}, the theorem is proved.
\end{proof}

\begin{lemma}\label{L: Convergence of GD in iterates for strongly convex and smooth (constrained)} (Lemma 5 in \citep{zhang2017improved})
Suppose that $f:\Xc\to \mathrm{R}$ is $\mu$-strongly convex and $L$-smooth. Letting $\xb^*=\argmin_{\xb\in\Xc}f(\xb)$ and $\vb=\Pi_{\Xc}\bigg(\ub-\eta\nabla f(\ub)\bigg)$, where $0<\eta\leq \frac{1}{L}$, then we have
$$\norm{\vb-\xb^*}^2\leq \big(1-\frac{2\mu }{1/\eta+\mu}\big)\norm{\ub-\xb^*}^2.$$
\end{lemma}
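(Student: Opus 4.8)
The plan is to prove this one-step contraction by combining the variational characterization of the Euclidean projection with the two curvature inequalities (strong convexity and smoothness), while crucially invoking strong convexity \emph{twice} so as to manufacture the $1/(1+\eta\mu)$ denominator in the stated rate. Writing $\vb=\Pi_{\Xc}(\ub-\eta\nabla f(\ub))$, I would first use the obtuse-angle property of the projection with comparison point $\xb^*\in\Xc$, namely $\inn{\ub-\eta\nabla f(\ub)-\vb,\ \xb^*-\vb}\le 0$. Expanding $\inn{\ub-\vb,\ \xb^*-\vb}$ through the polarization identity $\inn{\ab,\bb}=\tfrac12(\norm{\ab}^2+\norm{\bb}^2-\norm{\ab-\bb}^2)$ (with $\ab=\ub-\vb$, $\bb=\xb^*-\vb$, so $\ab-\bb=\ub-\xb^*$) converts this into the working inequality
\[
\norm{\vb-\xb^*}^2\le \norm{\ub-\xb^*}^2-\norm{\ub-\vb}^2+2\eta\,\inn{\nabla f(\ub),\ \xb^*-\vb},
\]
which isolates a single gradient-inner-product term to be controlled.

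Next I would bound $\inn{\nabla f(\ub),\ \xb^*-\vb}$ by splitting it as $\inn{\nabla f(\ub),\ \xb^*-\ub}+\inn{\nabla f(\ub),\ \ub-\vb}$. For the first piece, strong convexity between $\ub$ and $\xb^*$ gives $\inn{\nabla f(\ub),\ \xb^*-\ub}\le f(\xb^*)-f(\ub)-\tfrac\mu2\norm{\ub-\xb^*}^2$; for the second, the descent lemma (smoothness applied between $\ub$ and $\vb$) gives $\inn{\nabla f(\ub),\ \ub-\vb}\le f(\ub)-f(\vb)+\tfrac L2\norm{\ub-\vb}^2$. Adding these cancels the $f(\ub)$ terms and leaves $\inn{\nabla f(\ub),\ \xb^*-\vb}\le f(\xb^*)-f(\vb)-\tfrac\mu2\norm{\ub-\xb^*}^2+\tfrac L2\norm{\ub-\vb}^2$.

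The key step — and the source of the factor-two improvement over the naive rate $1-\eta\mu$ — is to control the residual $f(\xb^*)-f(\vb)$ using strong convexity a \emph{second} time, now between $\vb$ and $\xb^*$, together with the first-order optimality condition $\inn{\nabla f(\xb^*),\ \vb-\xb^*}\ge 0$ (valid precisely because $\vb\in\Xc$). This yields $f(\vb)-f(\xb^*)\ge \tfrac\mu2\norm{\vb-\xb^*}^2$, i.e.\ $f(\xb^*)-f(\vb)\le -\tfrac\mu2\norm{\vb-\xb^*}^2$. Substituting the gradient bound back into the working inequality and collecting terms produces
\[
\norm{\vb-\xb^*}^2\le (1-\eta\mu)\norm{\ub-\xb^*}^2-\eta\mu\norm{\vb-\xb^*}^2+(\eta L-1)\norm{\ub-\vb}^2.
\]
Here the hypothesis $\eta\le 1/L$ makes $\eta L-1\le 0$, so the $\norm{\ub-\vb}^2$ term is dropped; moving $\eta\mu\norm{\vb-\xb^*}^2$ to the left gives $(1+\eta\mu)\norm{\vb-\xb^*}^2\le(1-\eta\mu)\norm{\ub-\xb^*}^2$, which is the claim after rewriting $\tfrac{1-\eta\mu}{1+\eta\mu}=1-\tfrac{2\mu}{1/\eta+\mu}$.

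The main obstacle is not any single estimate but the bookkeeping that forces a $\norm{\vb-\xb^*}^2$ term onto \emph{both} sides of the inequality: without the second application of strong convexity around $\vb$, the identical calculation only delivers the weaker rate $1-\eta\mu$, so recognizing that $\vb\in\Xc$ permits invoking optimality of $\xb^*$ at $\vb$ is the crux of matching the stated bound exactly. I would also double-check that the smoothness contribution enters as $\tfrac L2\norm{\ub-\vb}^2$ (tied to the step, not to $\xb^*$), since this is precisely the term the condition $\eta\le 1/L$ is calibrated to absorb; any looser handling of it would forfeit the clean closed-form rate.
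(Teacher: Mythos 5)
Your proof is correct. Every step checks: the obtuse-angle property of the projection at the comparison point $\xb^*\in\Xc$, the polarization identity giving the working inequality $\norm{\vb-\xb^*}^2\le \norm{\ub-\xb^*}^2-\norm{\ub-\vb}^2+2\eta\inn{\nabla f(\ub),\xb^*-\vb}$, the strong-convexity and descent-lemma bounds at $\ub$, and the second application of strong convexity at $\vb$ (valid because $\vb\in\Xc$, so first-order optimality of $\xb^*$ gives $\inn{\nabla f(\xb^*),\vb-\xb^*}\ge 0$) together yield $(1+\eta\mu)\norm{\vb-\xb^*}^2\le(1-\eta\mu)\norm{\ub-\xb^*}^2+(\eta L-1)\norm{\ub-\vb}^2$, after which $\eta\le 1/L$ kills the last term and the identity $\frac{1-\eta\mu}{1+\eta\mu}=1-\frac{2\mu}{1/\eta+\mu}$ closes the argument. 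Note, however, that this paper does not prove the lemma at all: it is imported by citation as Lemma 5 of \citep{zhang2017improved}, so there is no in-paper proof to compare against; your derivation is the standard argument underlying that cited result, and you correctly isolated its crux --- without the second strong-convexity inequality at $\vb$, the identical bookkeeping only delivers the weaker rate $1-\eta\mu$, and it is precisely the extra $-\eta\mu\norm{\vb-\xb^*}^2$ migrating to the left-hand side that produces the $\frac{1-\eta\mu}{1+\eta\mu}$ contraction. One pedantic point worth recording: since $\mu$-strong convexity and $L$-smoothness are assumed only over $\Xc$, your descent-lemma step between $\ub$ and $\vb$ implicitly requires $\ub\in\Xc$, which holds in the only way the lemma is used (in Algorithm \ref{alg:OMGD(constrained)}, $\ub=\zb^{(j-1)}_{t+1}\in\Xc$ because every iterate is projected).
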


\begin{proof}[\textbf{Proof of Theorem \ref{T:Online multiple gradient (constrained)}}]
This proof mainly follows the proof of Theorem \ref{T:Online multiple gradient}, so we would just point out some different steps here.\\
Instead of having \eqref{eq: OMGD proof eq4} and \eqref{eq: OMGD proof eq5}, we get the following two equations:\\
Based on the update rule of Algorithm \ref{alg:OMGD(constrained)} and Lemma \ref{L: Convergence of GD in iterates for strongly convex and smooth (constrained)}, we get 
\begin{equation}\label{eq: OMGD (constrained) proof eq1}
    \norm{\xb_t - \xb^*_{t-1}}^2\leq \big(1-\frac{2\mu }{1/\eta+\mu}\big)^{K_t} \norm{\xb_{t-1} - \xb^*_{t-1}}^2\leq \big(1-\frac{2\mu }{1/\eta+\mu}\big)^{K_t} D^2,
\end{equation}
where $D\triangleq \max_{\xb,\yb\in\Xc}\norm{\xb-\yb}$.\\
Based on the $L$-smoothness and \eqref{eq: OMGD (constrained) proof eq1}, we have 
\begin{equation}\label{eq: OMGD (constrained) proof eq2}
\begin{split}
    f_{t-1}(\xb_t) - f_{t-1}(\xb^*_{t-1})&\leq\nabla f_{t-1}(\xb^*_{t-1})^\top(\xb_t - \xb^*_{t-1})+ \frac{L}{2}\norm{\xb_t-\xb^*_{t-1}}^2\\
    &\leq\norm{\nabla f_{t-1}(\xb^*_{t-1})}D+ \frac{L}{2}\big(1-\frac{2\mu }{1/\eta+\mu}\big)^{K_t} D^2.
\end{split}
\end{equation}
Following the corresponding proof steps for Theorem \ref{T:Online multiple gradient}, we have
\begin{equation}\label{eq: OMGD (constrained) proof eq3}
\begin{split}
    \sum_{t=1}^T f_t(\xb_t)-f_t(\xb^*_t)&\leq (f_1(\xb_1)-f_1(\xb^*_1)) + 2\sum_{t=2}^T \sup_{\xb\in \Xc}|f_t(\xb) - f_{t-1}(\xb)|\\ &+ \sum_{t=2}^T\frac{D^2L}{2} \big(1-\frac{2\mu }{1/\eta+\mu}\big)^{K_t} + D\sum_{t=2}^T\norm{\nabla f_{t-1}(\xb^*_{t-1})}\\
    &\leq (f_1(\xb_1)-f_1(\xb^*_1)) + 2V_T + \frac{D^2L}{2}\sum_{t=2}^T\frac{1}{t^2} + D\sum_{t=2}^T\norm{\nabla f_{t-1}(\xb^*_{t-1})}\\
    &\leq (f_1(\xb_1)-f_1(\xb^*_1)) + 2V_T + \frac{D^2L}{2}(\frac{\pi^2}{6}) + D\sum_{t=2}^T\norm{\nabla f_{t-1}(\xb^*_{t-1})},
\end{split}
\end{equation}
where the second inequality holds since $t^2\big(1-\frac{2\mu }{1/\eta+\mu}\big)^{K_t}\leq 1$ based on the choice of $K_t.$\\
On the other hand, from \eqref{eq: OMGD (constrained) proof eq1} we have that
\begin{equation}\label{eq: OMGD (constrained) proof eq4}
\begin{split}
    \norm{\xb_t - \xb^*_t}^2&\leq 2\norm{\xb_t - \xb^*_{t-1}}^2 + 2\norm{\xb^*_{t-1}-\xb^*_t}^2\\
    &\leq 2\big(1-\frac{2\mu }{1/\eta+\mu}\big)^{K_t}\norm{\xb_{t-1} - \xb^*_{t-1}}^2 + 2\norm{\xb^*_{t-1}-\xb^*_t}^2\\
    &\leq 2\big(1-\frac{2\mu }{1/\eta+\mu}\big)^{K_t} D^2 + 2\norm{\xb^*_{t-1}-\xb^*_t}^2.
\end{split}
\end{equation}
Summing \eqref{eq: OMGD (constrained) proof eq4} over $t=2,\ldots,T$ and adding $\norm{\xb_1 - \xb^*_1}^2 $ to both sides, we have 
\begin{equation}\label{eq: OMGD (constrained) proof eq5}
\begin{split}
    \sum_{t=1}^T \norm{\xb_t - \xb^*_t}^2&\leq \norm{\xb_1 - \xb^*_1}^2 + 2D^2\sum_{t=2}^T\big(1-\frac{2\mu }{1/\eta+\mu}\big)^{K_t}  + 2\sum_{t=2}^T\norm{\xb^*_t-\xb^*_{t-1}}^2\\
    &\leq \norm{\xb_1 - \xb^*_1}^2 + 2D^2(\frac{\pi^2}{6}) + 2\sum_{t=2}^T\norm{\xb^*_t-\xb^*_{t-1}}^2.
\end{split}    
\end{equation}
Based on the $L$-smoothness, we know
\begin{eqnarray*}
\sum_{t=1}^T f_t(\xb_t) - f_t(\xb^*_t)&\leq& \sum_{t=1}^T\frac{L}{2}\norm{\xb_t-\xb^*_t}^2+\nabla f_t (\xb^*_t)^\top (\xb_t-\xb^*_t)\\
&\leq&\sum_{t=1}^T\frac{L}{2}\norm{\xb_t-\xb^*_t}^2+ D\sum_{t=1}^T\norm{\nabla f_t (\xb^*_t)}.
\end{eqnarray*}
Substituting \eqref{eq: OMGD (constrained) proof eq5} into the inequality above, we have
\begin{equation}\label{eq: OMGD (constrained) proof eq6}
\begin{split}
    \sum_{t=1}^T f_t(\xb_t) - f_t(\xb^*_t)&\leq \sum_{t=1}^T\frac{L}{2}\norm{\xb_t-\xb^*_t}^2+ D\sum_{t=1}^T\norm{\nabla f_t (\xb^*_t)}\\
    &\leq \frac{L}{2}\big[\norm{\xb_1 - \xb^*_1}^2 + 2D^2(\frac{\pi^2}{6}) + 2\sum_{t=2}^T\norm{\xb^*_t-\xb^*_{t-1}}^2\big]+ D\sum_{t=1}^T\norm{\nabla f_t (\xb^*_t)}.
\end{split}
\end{equation}

\end{proof}

\newpage
\bibliographystyle{apalike}
\bibliography{Reference}

\begin{thebibliography}{}

\bibitem[Besbes et~al., 2015]{besbes2015non}
Besbes, O., Gur, Y., and Zeevi, A. (2015).
\newblock Non-stationary stochastic optimization.
\newblock {\em Operations research}, 63(5):1227--1244.

\bibitem[Chiang et~al., 2012]{chiang2012online}
Chiang, C.-K., Yang, T., Lee, C.-J., Mahdavi, M., Lu, C.-J., Jin, R., and Zhu,
  S. (2012).
\newblock Online optimization with gradual variations.
\newblock In {\em Conference on Learning Theory}, pages 6--1.

\bibitem[Daniely et~al., 2015]{daniely2015strongly}
Daniely, A., Gonen, A., and Shalev-Shwartz, S. (2015).
\newblock Strongly adaptive online learning.
\newblock In {\em International Conference on Machine Learning}, pages
  1405--1411.

\bibitem[Hall and Willett, 2015]{hall2015online}
Hall, E.~C. and Willett, R.~M. (2015).
\newblock Online convex optimization in dynamic environments.
\newblock {\em IEEE Journal of Selected Topics in Signal Processing},
  9(4):647--662.

\bibitem[Hazan, 2016]{hazan2016introduction}
Hazan, E. (2016).
\newblock Introduction to online convex optimization.
\newblock {\em Foundations and Trends in Optimization}, 2(3-4):157--325.

\bibitem[Hazan, 2019]{hazan2019introduction}
Hazan, E. (2019).
\newblock Introduction to online convex optimization.
\newblock {\em arXiv preprint arXiv:1909.05207}.

\bibitem[Hazan et~al., 2007]{hazan2007logarithmic}
Hazan, E., Agarwal, A., and Kale, S. (2007).
\newblock Logarithmic regret algorithms for online convex optimization.
\newblock {\em Machine Learning}, 69(2-3):169--192.

\bibitem[Hazan and Seshadhri, 2007]{hazan2007adaptive}
Hazan, E. and Seshadhri, C. (2007).
\newblock Adaptive algorithms for online decision problems.
\newblock In {\em Electronic colloquium on computational complexity (ECCC)},
  volume~14.

\bibitem[Jadbabaie et~al., 2015]{jadbabaie2015online}
Jadbabaie, A., Rakhlin, A., Shahrampour, S., and Sridharan, K. (2015).
\newblock Online optimization: Competing with dynamic comparators.
\newblock In {\em Artificial Intelligence and Statistics}, pages 398--406.

\bibitem[Lesage-Landry et~al., 2020]{lesage2020second}
Lesage-Landry, A., Taylor, J.~A., and Shames, I. (2020).
\newblock Second-order online nonconvex optimization.
\newblock {\em arXiv preprint arXiv:2001.10114}.

\bibitem[Mokhtari et~al., 2016]{mokhtari2016online}
Mokhtari, A., Shahrampour, S., Jadbabaie, A., and Ribeiro, A. (2016).
\newblock Online optimization in dynamic environments: Improved regret rates
  for strongly convex problems.
\newblock In {\em 2016 IEEE 55th Conference on Decision and Control (CDC)},
  pages 7195--7201. IEEE.

\bibitem[Nesterov, 1998]{nesterov1998introductory}
Nesterov, Y. (1998).
\newblock Introductory lectures on convex programming volume i: Basic course.

\bibitem[Rakhlin and Sridharan, 2013a]{rakhlin2013online}
Rakhlin, A. and Sridharan, K. (2013a).
\newblock Online learning with predictable sequences.

\bibitem[Rakhlin and Sridharan, 2013b]{rakhlin2013optimization}
Rakhlin, S. and Sridharan, K. (2013b).
\newblock Optimization, learning, and games with predictable sequences.
\newblock In {\em Advances in Neural Information Processing Systems}, pages
  3066--3074.

\bibitem[Ravier and Tarokh, 2019]{ravier2019prediction}
Ravier, R. and Tarokh, V. (2019).
\newblock Prediction in online convex optimization for parametrizable objective
  functions.
\newblock {\em arXiv preprint arXiv:1901.11500}.

\bibitem[Yuan and Lamperski, 2019]{yuan2019trading}
Yuan, J. and Lamperski, A. (2019).
\newblock Trading-off static and dynamic regret in online least-squares and
  beyond.
\newblock {\em arXiv preprint arXiv:1909.03118}.

\bibitem[Zhang et~al., 2019]{zhang2019adaptive}
Zhang, L., Liu, T.-Y., and Zhou, Z.-H. (2019).
\newblock Adaptive regret of convex and smooth functions.
\newblock In {\em International Conference on Machine Learning}, pages
  7414--7423.

\bibitem[Zhang et~al., 2020]{zhang2020minimizing}
Zhang, L., Lu, S., and Yang, T. (2020).
\newblock Minimizing dynamic regret and adaptive regret simultaneously.
\newblock {\em arXiv preprint arXiv:2002.02085}.

\bibitem[Zhang et~al., 2018a]{zhang2018adaptive}
Zhang, L., Lu, S., and Zhou, Z.-H. (2018a).
\newblock Adaptive online learning in dynamic environments.
\newblock In {\em Advances in neural information processing systems}, pages
  1323--1333.

\bibitem[Zhang et~al., 2017]{zhang2017improved}
Zhang, L., Yang, T., Yi, J., Jin, R., and Zhou, Z.-H. (2017).
\newblock Improved dynamic regret for non-degenerate functions.
\newblock In {\em Advances in Neural Information Processing Systems}, pages
  732--741.

\bibitem[Zhang et~al., 2018b]{zhang2018dynamic}
Zhang, L., Yang, T., Zhou, Z.-H., et~al. (2018b).
\newblock Dynamic regret of strongly adaptive methods.
\newblock In {\em International Conference on Machine Learning}, pages
  5882--5891.

\bibitem[Zinkevich, 2003]{zinkevich2003online}
Zinkevich, M. (2003).
\newblock Online convex programming and generalized infinitesimal gradient
  ascent.
\newblock In {\em Proceedings of the 20th international conference on machine
  learning (icml-03)}, pages 928--936.

\end{thebibliography}

\end{document}